\newtheorem{definition}{\textbf{Definition}}\newtheorem{theorem}{\textbf{Theorem}}\newtheorem{proposition}{\textbf{Proposition}}
\title{BN-invariant Sharpness Regularizes the Training Model to Better Generalization}
\author{
Mingyang Yi$^{1,2}$
\and
Huishuai Zhang$^3$\and
Wei Chen$^3$\and
Zhi-ming Ma$^{1,2}$\And
Tie-Yan Liu$^{3}$
\affiliations
$^1$University of Chinese Academy of Sciences\\
$^2$Academy of Mathematics and Systems Science, Chinese Academy of Sciences\\
$^3$ Microsoft Research
\emails
yimingyang17@mails.ucas.edu.cn,
mazm@amt.ac.cn,
\{huzhang,wche, tie-yan.liu\}@microsoft.com
}
\begin{document}

\maketitle

\begin{abstract}
It is arguably believed that flatter minima can generalize better. However, it has been pointed out that the usual definitions of sharpness, which consider either the maxima or the integral of loss over a $\delta$ ball of parameters around minima, cannot give consistent measurement for scale invariant neural networks, e.g., networks with batch normalization layer. In this paper, we first propose a measure of sharpness, BN-Sharpness, which gives consistent value for equivalent networks under BN. 
It achieves the property of scale invariance by connecting the integral diameter with the scale of parameter. Then we present a computation-efficient way to calculate the BN-sharpness approximately i.e., one dimensional integral along the "sharpest" direction. Furthermore, we use the BN-sharpness to regularize the training and design an algorithm to minimize the new regularized objective. Our algorithm achieves considerably better performance than vanilla SGD over various  experiment settings.
\end{abstract}

\section{Introduction}
\noindent With the support of big data, deep learning techniques has achieved a huge success across domains. 
%such as computer vision \cite{VGG,Lenet1,he2016deep,huang2017densely}, natural language processing \cite{NLP1,vaswani2017attention}, information system \cite{cheng2016wide,wang2017deep}, etc. 
In recent years, it becomes a common sense that deep neural networks have perfect training performance. Specifically, \cite{zhang} empirically shows that the popular neural networks (NN) can always reach zero training error at the end of training process. Moreover, \cite{Du} and \cite{Zhu} prove that, for the neural networks with sufficient width, gradient descent converges to the global minima of the objective, under some conditions on the data generation. Thus how to find minima taht generalize well seems to be more critical in the study of deep neural network. 
\par
%In statistical learning theory, capacity (like VC dimension, Rademacher Average, Covering Number) can upper-bound the uniform divergence of training and test performance of arbitrary functions in the hypothesis space. It is well-observed that the generalization bound for deep neural networks in terms of these capacities are quite loose \cite{zhang}. Therefore these factors are uninformed for DNN. 
It's arguably believed that minima locate in a flat valley generalize better. \cite{BehnamNeyshabur} employs the PAC Bayes theory \cite{PAC1} and proves a generalization bound give by "expected sharpness". The conclusion gives us a new angle to discuss generalization error of a neural network. This viewpoint is also empirically confirmed by \cite{SharpMinima} and \cite{Li}. However, for neural network with scale invariant property, the definitions of sharpness become problematic.
%Based on the theoretically analysis and empirically observation, lots of algorithms focus on local property are proposed to produce a "flatter" minimum which generalize better \cite{entropy_sgd,HuanWang,smooth_out}. 
%None of these algorithms based on the definition of sharpness since it suffer from two points: The existing definitions of sharpness is ambiguous; The sharpness is hard to be calculated. Alternatively, 
%They set optimization object as integral value of loss function in a valley instead of single point in order to force iterates walk into a flatter valley. However, this kind of loss function is high dimensional integral (dimension of parameter) which is hard to be computed practically.    
%A standpoint is that a "flat" minimum(i.e. the value of loss function close with each other in a small neighbor of the minimum) can generalize well. Sharpness is a measurement to describe it. Empirically observation \cite{SharpMinima, Li} argue that SGD with large batch will produce "sharp" minima generalize worse, which can be viewed as an evidence of validity of sharpness. However, the definitions of sharpness used in Keskar et al.\cite{SharpMinima} is ambiguous, Dinh et al.\cite{SharpMinimacan} prove the conclusion by confirming one-hidden layer rectified neural network has minimums with different sharpness value but same generalization error. This conclusion motivate us to present an appropriate definition to measure sharpness.
%Lost of definitions of sharpness were derived\cite{three_factors, Wu, BehnamNeyshabur}, but none of them has the scale invariant ability. 
\par
One important case is deep neural network with batch normalization (BN) \cite{Sergey}. BN is an important component which normalize the distribution of input to each neuron in the network by mean and standard deviation of input computed over a mini-batch of training data. 
%It's considered to smooth the loss surface which make the optimization of neural network much easier \cite{BN1}. 
Batch normalization will bring a scale invariant property to neural network which will make the definition of sharpness for neural network problematic (two parameter points with same generalization error but different sharpness). Since sharpness is related to generalization, a natural question is can we develop an appropriate definition of sharpness for neural network with batch normalization? Meanwhile, can we leverage the appropriate measurement of sharpness to develop an algorithm which helps us find minima generalize better? 
\par
In this paper, we answer the two above problems positively. First, we show that the original definition of sharpness for neural network with batch normalization is ambiguous. Specifically, the original sharpness ($\delta$-sharpness) consider the maxima of loss over a $\delta$ ball of parameters around minima with radius $\delta$ unrelated to parameter $\theta$. However, due to the scale invariant property of neural network with batch normalization, the radius $\delta$ should scale with parameter norm $\theta$. On the other hand, the new measurement of sharpness should be computational efficient since we want to leverage it to help us find minima generalize better. Based on the two above points, we propose a "BN-sharpness" to measure the sharpness for BN neural network. In specific, BN-Sharpness is a one dimensional integral of the loss's difference value along the "sharpest" direction in a small neighborhood while the diameter of this neighborhood is related with parameter scale. It's an intuitively thought that a minimum locate in a flat valley when the loss values in the valley are closely with each other along the steepest changing direction.
\par
In order to find minima locate in flat valley which generalize better, we penalize the original optimization objective with BN-Sharpness. Due to the one dimensional integral structure of BN-Sharpness, we can easily acquire the "sharpest direction" by optimization on manifold \cite{Boumal} as well as the gradient of BN-Sharpness.
\par 
It's empirically showed that SGD with large batch size tend to produce "sharp" minima which generalize worse \cite{SharpMinima}. We test our algorithm on CIFAR dataset \cite{cifar}, it has preferable results under large batch size compared with baselines (SGD, Entropy SGD).
\subsection{Related work}
\noindent
Sharpness is a property connected with the loss surface. \cite{Li} first visualize the loss landscape of NN to describe that "sharp" minima indeed generalize worse. Such phenomenon was also empirically confirmed by \cite{SharpMinima} and \cite{Wu}. \cite{BehnamNeyshabur} also gives the conclusion a theoretical interpretation.
%\par
%Existing articles concern in sharpness can be divided into two categories, which respectively considering natural property of sharpness and designing algorithm based on local structured to reach a "flat" minimum.
%\par
%Lots of definitions describe sharpness were proposed. Jastrzlbsk et al.\cite{three_factors} consider $\text{tr}(H)$ as a kind of sharpness, where $H$ is Hessian of minimal point. Based on that, they investigate the relationship between learning rate and batch size. Keskar\cite{SharpMinima} characterize sharpness by the neighborhood. However, the two definitions were argued by  Dinh et al.\cite{SharpMinimacan} for not sufficient to determine the generalization ability, because they are not invariant to symmetries in the network. Neyshabur et al.\cite{BehnamNeyshabur} uses "expected sharpness" to describe sharpness, and they first quantitatively bridge connection between generalization error and sharpness under PAC Bayes theory\cite{PAC1}. 
\par
In fact, the definition of sharpness is proved to be ambiguous for ReLU neural network \cite{SharpMinimacan}. Suffering from the definition of sharpness and computational complexity, some local structured algorithms aimed to produce "flat" minima avoid exactly definition of sharpness. Entropy SGD proposed by \cite{entropy_sgd} is motivated by the local geometry of the energy landscape, it alters loss as accumulated "energy value" instead of single point value. \cite{smooth_out} argues that choosing integral of a small cubic around iterate as loss function can produce a smoother loss function which will lead to a "flat" minimum. \cite{HuanWang} sets an expected loss function to produce "flatter" minima under PAC Bayes theory. However, all of these algorithms will face a high dimensional integral.
\par
There are lots of sharpness based algorithms to find flat minima \cite{entropy_sgd,HuanWang,smooth_out}. Since it's hard to compute the gradient of the original sharpness, they set optimization objective as integral of the loss function in a ball to force iterates walking into a flat valley instead of optimizing original loss function regularized by sharpness. This kind of optimization objective involves a high dimensional integral which is actually hard to be computed in practice.
\par
For the topic of neural network with batch normalization, \cite{BN2} and \cite{HuangLei} give Riemannian approach to optimize neural network with batch normalization. \cite{Yuan} propose a general variant of batch normalization to speed up training process. %\cite{HuangLei2} propose a more effective Decorrelated Batch Normalization, which not just centers and scales activations but whitens them. 
However, none of these works consider combining the sharpness with batch normalization to reduce generalization error.
\subsection{Contribution}
\setlength{\parindent}{2em}(1) We show that the generally accepted definition of sharpness: $\delta$-sharpness \cite{SharpMinima} is ill-posed for NN with BN. % The conclusion is confirmed by the scale invariant property of BN-network.% Meanwhile, we propose rescaled version of the two definitions. They are scale invariant for BN-network, while the ambiguous of original definition is caused by such property. %Inspiring by the theoretically ability of $\delta$-sharpness, 
We propose a scale invariant and computational efficient "BN-Sharpness" to measure sharpness for NN with BN.
%which can be used as a regularization term to produce "flat" minima. 
%We also theoretically analysis the relationship between original sharpness and BN-Sharpness. After that, we give a generalization error bound based on BN-Sharpness. 
\par
(2) We present a computational efficient algorithm based on BN-Sharpness to enhance generalization ability of iterates which has preferable experimental results. 
%where we achieve the "sharpest" direction in BN-Sharpness based on gradient ascent on manifold. The algorithm will force iterates walk into a flat valley which leads to minima generalize better. It also has computational advantage compared to existing sharpness based algorithms, because it alter the optimization target to an one dimensional integral.
% And our algorithm is much quickly than Entropy SGD for training one epoch data set.

\section{Background}
%In this section, we show the definitions of sharpness is exactly ill-posed for BN neural network. The proof is simple but insightful, it illustrate why we propose BN-Sharpness in the next section. 
\subsection{Definitions of sharpness}
There are lots of definitions to describe sharpness \cite{SharpMinima,three_factors,Wu}. We focus on a widely discussed definition: $\delta$-sharpness this paper.% In fact, there is a volume sharpness \cite{volume}, our conclusion can be easily generalize such definition. 
\par
%by explaining why original sharpness is inappropriate, we draw forth the definition of $L^{p}$ sharpness. Then, We explain the reason why $L^{p}$ sharpness is well defined. All of the definitions related to sharpness is a local character of minimum. We also analysis the relation between sharpness defined by Keskar et al.\cite{SharpMinima}, Jastrzlbsk et al.\cite{three_factors} and $L^{p}$ sharpness. First, we give the definition of sharpness in Keskar et al.\cite{SharpMinima}.
%The neighborhood value based sharpness $S_{\delta\text{-sharpness}}(\theta)$ is proposed by \cite{SharpMinima}.
\begin{definition}[$\delta$-$L^{p}$ sharpness]
	Let $B_{2}(\theta,\delta)$ be an Euclidean ball centered on a minimum $\theta$ with radius $\delta$. Then, for a non-negative valued loss function $L(\cdot)$ and non-negative number $p$, the $\delta$-$L^{p}$ sharpness will be defined as
	\begin{equation}\small
	S^{p}_{\delta\text{-sharpness}}(\theta)=\frac{\left(\int_{\theta^{\prime}\in B_{2}(\theta,\delta)}\left|L(\theta^{\prime})-L(\theta)\right|^{p}d\theta^{\prime}\right)^{\frac{1}{p}}}{1+L(\theta)}.
	\label{sharpness}
	\end{equation}
	\label{def:sharpness}
\end{definition}
In fact, denominator of equation (\ref{sharpness}) will close to 1 when $L(\theta)$ is small. Thus, we ignore it during discussion. We can actually prove that $\delta$-sharpness $S_{\delta\text{-sharpness}}$ used in \cite{SharpMinima} is actually $\delta$-$L^{\infty}$ sharpness\footnote{A detailed discussion can be referred to a long version of this paper which can be found in .}. There are some other measures of sharpness such as $\text{tr}(H)$ \cite{three_factors} and $\lambda_{\text{max}}(H)$ \cite{Wu} where $H$ is the Hessian of minimum. $\lambda_{\max}(\cdot)$ is the largest eigenvalue.% are actually weaker standard compared with $\delta$-sharpness. %Then, we give an Hessian based sharpness \cite{three_factors}
%However, sharpness measured by $L^{p}$ norm in $\theta$ is a high dimension integral
%\begin{definition}[Trace-sharpness]
%	The trace sharpness $S_{\text{trace-sharpness}}(\theta)$ is defined as $\text{tr}(H(\theta))$, where $H(\theta)$ is the Hessian matrix on point $\theta$.
%\end{definition}
%We see trace sharpness can be upper bounded by $\delta$-sharpness (by Taylor's expansion), hence it's a stronger standard to describe sharpness.
%Besides the two representatively definitions, our subsequent conclusion can be easily transformed to other similar measurements of sharpness.
\subsection{Batch normalization}
\noindent
We briefly revisit the batch normalization and its properties. A network with BN transforms the input value to a neuron from $z={'}{T}x$ to
\begin{equation}\small
BN(z)=\gamma \frac{z-E(z)}{\sqrt{Var(z)}}+\beta = \gamma \frac{\theta^{T}(x-E(x))}{\sqrt{\theta^{T}V_{x}\theta}}+\beta.
\end{equation}
We can easily verify $BN(az)=BN(z)$ for any $a>0$. Then, for a partially batch normalized neural network (Some of input values to neuron are batch normalized) with $N$ parameter vector $\theta=(\theta_{1},\cdots,\theta_{N_{1}},\theta_{N_{1}+1},\cdots,\theta_{N})^{T}$, where $\theta_{i}$ ($i=1\cdots N_{1}$) is parameter vector connected with batch normalized neuron and $(\theta_{N_{1}+1},\cdots,\theta_{N})$ connect neuron without batch normalization. We can actually view the affine parameters $\gamma,\beta$ and bias parameters of each neuron as un-batch normalized parameters. We integrate them into parameter vector $\theta$ during our latter discussion. In the latter discussion, our analysis is applied to partially batch normalized neural network if we don't give an additional illustration. Now we give the definition of scale transformation for partially batch normalized neural network.
\begin{definition}
$T_{\vec{a}}(\cdot)$ denote scale transformation for a partially batch normalized neural network with  
\begin{equation}\small
T_{\vec{a}}(\theta)=(a_{1}\theta_{1},\cdots a_{N_{1}}\theta_{N_{1}},\theta_{N_{1}+1},\cdots,\theta_{N})^{T}, a_{i}>0.
\end{equation}
\end{definition}
We see the loss function $L(\theta)$ satisfy $L(\theta)=L(T_{\vec{a}}(\theta))$ for any $T_{\vec{a}}(\cdot)$ \footnote{For the situation of Pre-ResNet which has input $BN((\vec{1}+\theta)^{T}z)$ in skip connection layer. It's not a scale invariant layer, we treat these $\theta$ as un-batch normalized parameters.}. We call this the property of scale invariant of BN. In fact, the ambiguous of sharpness is brought by such property.
\section{BN-Sharpness}
%\subsection{Existing sharpness is ill-posed for NN with BN}
\noindent
In this section, we give a $\delta$-BN sharpness to measure sharpness. A well defined geometrical measurement linked with generalization error for neural network with batch normalization should be scale invariant. We start with theorem to explain why the original measurement of sharpness is inappropriate.

%\begin{proof}
%	Naturally, $L(\theta)=L(T_{\vec{a}}(\theta))$ means $\theta$ have same generalization error with $T_{\vec{a}}(\theta)$. A well defined geometrical measurement should be equal at point $\theta$ and $T_{\vec{a}}(\theta)$.$\hfill\blacksquare$
%\end{proof} 
\begin{theorem}
	Given a partially batch normalized network, $\delta$-sharpness is not scale invariant.%and trace-sharpness are not scale invariant.
	\label{pro:invariant}
\end{theorem}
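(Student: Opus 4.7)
The plan is to exhibit a single scaling $T_{\vec{a}}$ under which $S^{p}_{\delta\text{-sharpness}}$ strictly changes. Because the denominator $1+L(\theta)$ is automatically preserved by the identity $L(\theta)=L(T_{\vec{a}}(\theta))$, I can focus entirely on the numerator and show that the Euclidean $\delta$-ball around $T_{\vec{a}}(\theta)$ samples the loss in a fundamentally different way than the $\delta$-ball around $\theta$ itself.

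The first step is a change of variables. In $S^{p}_{\delta\text{-sharpness}}(T_{\vec{a}}(\theta))$ I would substitute $\theta'=T_{\vec{a}}(\eta)$. The region $\theta'\in B_{2}(T_{\vec{a}}(\theta),\delta)$ pulls back to the ellipsoid
\[
E_{\vec{a}}=\Bigl\{\eta:\sum_{i\le N_{1}}a_{i}^{2}\|\eta_{i}-\theta_{i}\|^{2}+\sum_{i>N_{1}}\|\eta_{i}-\theta_{i}\|^{2}\le\delta^{2}\Bigr\},
\]
and the Jacobian contributes $\prod_{i\le N_{1}}a_{i}^{d_{i}}$ where $d_{i}=\dim\theta_{i}$. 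Using scale invariance $L(T_{\vec{a}}(\eta))=L(\eta)$, the numerator becomes $\bigl(\prod_{i\le N_{1}}a_{i}^{d_{i}}\int_{E_{\vec{a}}}|L(\eta)-L(\theta)|^{p}\,d\eta\bigr)^{1/p}$, which for generic $\vec{a}\neq\vec{1}$ is an integral over an ellipsoid rather than over the original ball $B_{2}(\theta,\delta)$.

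To make the discrepancy quantitative, I would specialize to $a_{i}=a$ for all BN-connected blocks and let $a\to\infty$. Rescaling the BN-block coordinates by $a$ turns $E_{\vec{a}}$ into a fixed Euclidean ball and exactly cancels the Jacobian, so by continuity the integral collapses to one that depends on the un-normalized coordinates alone, specifically to the $p$-th moment of loss variation in the un-normalized directions only. In contrast, the original sharpness $S^{p}_{\delta\text{-sharpness}}(\theta)$ also sees loss variation along the BN-connected directions, so the two numbers generically differ.

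The main obstacle is producing a clean witness where the strict inequality is unambiguous. I would pick a minimal example, e.g., a single BN neuron plus one un-normalized weight at a local minimum of $L$ with nonzero curvature in both the BN and the un-BN direction. A second-order Taylor expansion near $\theta$ reduces both integrals to explicit quantities determined by the Hessian blocks, and one reads off that $S^{p}_{\delta\text{-sharpness}}(T_{\vec{a}}(\theta))$ strictly decreases as $a$ grows while $S^{p}_{\delta\text{-sharpness}}(\theta)$ is fixed, giving $S^{p}_{\delta\text{-sharpness}}(\theta)\neq S^{p}_{\delta\text{-sharpness}}(T_{\vec{a}}(\theta))$ for some $\vec{a}$, contradicting scale invariance.
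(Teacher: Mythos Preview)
Your argument is sound, but it proceeds quite differently from the paper's. The paper sends the BN-block scale to zero rather than to infinity: choosing $a_{0}\le \delta/(\sqrt{N}\max_{i}\|\theta_{i}\|)$ makes the rescaled BN parameters so small that the Euclidean $\delta$-ball $B_{2}(T_{\vec{a}}(\theta),\delta)$ contains the point with that entire BN layer set to zero. At such a point the network collapses to (essentially) a constant predictor, so $\max_{\theta'\in B_{2}(T_{\vec{a}}(\theta),\delta)}\bigl(L(\theta')-L(\theta)\bigr)$ is bounded below by the loss gap to a constant model, regardless of how flat $\theta$ was originally. This is a one-line geometric observation aimed at the $L^{\infty}$ case the theorem actually states. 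Your route instead sends $a\to\infty$, pulls the ball back to an ellipsoid via $\theta'=T_{\vec{a}}(\eta)$, and argues that the limiting integral sees only the un-normalized directions; you then certify strict inequality with a second-order witness at a minimum with nonzero curvature in both blocks. The trade-off: the paper's argument is shorter and needs no Jacobian or Taylor expansion, while your change-of-variables treats all finite $p$ uniformly and makes the dependence on the Hessian explicit. One caveat worth noting in your write-up: the $a\to\infty$ limit alone does not force a \emph{strict} change unless the loss genuinely varies in the BN directions near $\theta$; that is precisely what your curvature hypothesis supplies, so the Taylor witness in your final paragraph is really the load-bearing step, and the first two paragraphs are scaffolding you could streamline.
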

\begin{proof}
	%Given a partially batch normalized network, without of generality,  we assume it is batch normalized in the first layer. We choose $\vec{a} = (a_{1},1,\cdots,1)^{T}$, then we see $L(\theta)=L(T_{\vec{a}}(\theta))$ but $\nabla^{2}L(\theta)=a_{1}^{2}\nabla^{2}L(T_{\vec{a}}(\theta))$. It illustrates that trace-sharpness is not scale invariant.
	%\par
	\emph{For $S_{\delta\text{-sharpness}}(\cdot)$, We consider a partially batch normalized neural network, without of loss generality we assume the network has batch normalized neuron in one layer. Then we choose $\vec{a}$ as $(1,\cdots,a_{0},\cdots,a_{0},1,\cdots,1)^{T}$ where $a_{0}$ locate in the same coordinate with batch normalized parameter.} 
	\emph{For any $\delta>0$, we can choose 
	\begin{equation}\small\small\small
	0< a_{0}\leq \frac{\delta}{\sqrt{N}\max_{1\leq i\leq N}\|\theta_{i}\|}
	\end{equation} 
	with $L(T_{\vec{a}}(\theta))=L(\theta)$. Because a parameter $\theta$ with one layer zero weight matrix locate in $B_{2}(T_{\vec{a}}(\theta),\delta)$ which result in $\mathop{\max}_{\theta^{\prime}\in B_{2}(\theta,\delta)}\left(L(\theta^{\prime})-L(\theta)\right)$ is at least as high as constant function. Then, the value of $S_{\delta\text{-sharpness}}(T_{\vec{a}}(\theta))$ is relatively large.}
\end{proof}
\par
The above result reveals that the original definition of sharpness is not well defined for BN-network. Actually, we can achieve a minimum with small generalization error but infinite sharpness if we rescale the minimizer adequately close to zero for $\delta$-sharpness. 
%If we rescale the parameter vector with a huge norm, we achieve a huge trace-sharpness. 
Hence, using $\delta$-sharpness as a measurement of generalization is meaningless. We see the ill-posed issue of $\delta$-sharpness suffers from the scale invariant ability of batch normalization network. Based on Theorem \ref{pro:invariant}, we aim to derive a \emph{scale invariant} sharpness. 
%\subsection{Why we propose BN-Sharpness}
\par
Due to these, we present a scale invariant meanwhile computational efficient measurement: "BN-Sharpness", which is instructive for us to leverage it to find "flat" minima. Now, we give the exact definition of "BN-Sharpness".%Before giving the exact definition of BN-Sharpness, we briefly
\par
\begin{definition}[$\delta$-$L^{p}$ BN-Sharpness]
Given a positive number $\delta$, and a parameter point $\theta$, and a partially batch normalized network, the $\delta$-$L^{p}$ BN-Sharpness is defined as
	\begin{equation}\small\small
	\|L(\cdot)\|_{p}^{\delta,\theta}=\mathop{\sup}_{v\in\phi(\theta)}\frac{1}{\delta^{\frac{1}{p}}}\left(\int_{-\delta}^{\delta}\left|\frac{L(\theta+tv)-L(\theta)}{\delta}\right|^{p}dt\right)^{\frac{1}{p}},
	\label{eq:Lpsharpness}
	\end{equation}
	\label{def:Lpsharpness}
where $\phi(\theta)$ is a set composed by $v$ with $\|v_{i}\|=\|\theta_{i}\|,\sqrt{\sum_{j=N_1+1}^{N}\|v_{j}\|^{2}}=1;i=1,\cdots,N_{1}$.
\end{definition}
We notice $v\in\phi(\theta)$ has same dimension with $\theta$. And it's decided by the parameter of a partially batch normalized network. Since $v\in\phi(\theta)$ has identically $l_{2}$ norm which is $\sqrt{\sum_{i=1}^{N_{1}}\|\theta_{i}\|^{2}+1}$, we use $\|\phi(\theta)\|$ to denote it. 
%The $\delta$-$L^{p}$ BN-Sharpness is inspiring by the $L^{p}$ norm of measurable function. 
For simplicity, we use BN-Sharpness to substitute $\delta$-$L^{p}$ BN-Sharpness. 
\par
%\subsection{$\delta$-$L^{p}$ BN-Sharpness}
As we have discussed in Section 2, the $L^{p}$ norm of function $L(\cdot) - L(\theta)$ defined in $B_{2}(\theta,\delta)$ can be a measurement of sharpness ($\delta$-sharpness is $L^{\infty}$ norm). The crucial problem of such $L^{p}$-sharpness is losing sight of parameter scale when choosing region diameter $\delta$. 
%For example, the radius of neighborhood in $\delta$-sharpness is unrelated to $\|\theta\|$ while loss function is scale invariant. 
%In fact, we can easily achieve scale invariant version of the two definitions. We see that $S_{\delta\cdot\|\phi(\theta)\|\text{-sharpness}}(\theta)$ and $\prod_{i=1}^{N_{1}}\|\theta_{i}\|^{2}S_{\text{trace-sharpness}}(\theta)$ are all scale invariant to $\theta$ for any $T_{\vec{a}}(\cdot)$. Even though, 
On the other hand, it's a high dimensional integral ($p<\infty$) which is hard to be computed in practice let alone combining them to reduce sharpness. The two shortages are also hold when $p=\infty$.%, because regularized by a trace of Hessian or maximal value in a ball is nearly impossible. %Hence, they are rarely helpful for us to find a flatter minimum.
\par
We see BN-Sharpness consider the scale of batch normalized parameter. Meanwhile BN-Sharpness is an one dimensional integral along the sharpest direction $v$ of $L(\theta)$ for each parameter component $\theta_{i}$. It will be computational efficient especially for the gradient which we will discuss it more specifically in Section 5. By Taylor's expansion, for each parameter component $\theta_{i}$ we have
\begin{equation}\small\small\small
\begin{aligned}
L(\theta+tv)-L(\theta)
%&L(\theta_{1}+tv_{1},\cdots\theta_{N}+tv_{N}) - L(\theta_{1},\theta_{2}+tv_{2},\cdots,\theta_{N}+tv_{N})\\
%& +\cdots + L(\theta_{1},\cdots\theta_{N}+tv_{N}) - L(\theta_{1},\cdots\theta_{N}) \\
&\approx \sum\limits_{i=1}^{N}t\nabla_{\theta} L(\theta)_{i}^{T}v_{i}+o(t\|v\|)\\
&\leq t\sum\limits_{i=1}^{N}\|\nabla_{\theta} L(\theta)_{i}\|\|v_{i}\|+o(t\|\phi(\theta)\|).
\end{aligned}
\label{eq:sharpnest}
\end{equation}
Here $\nabla L(\theta)_{i}$ represent gradient of $L(\cdot)$ to component parameter vector $\theta_{i}$. The equation \eqref{eq:sharpnest} gives an approximate calculating of the "sharpest" direction when $\delta$ is small. Precisely calculation of BN-Sharpness requires optimization on Oblique manifold which we will discuss in Section 4.
\par
Now, we prove that BN-Sharpness is scale invariant. Hence, it's appropriate to measure sharpness for BN neural network. 
\par
\begin{theorem}
	The loss function $L(\theta)$ of a DNN with batch normalization satisfies $\|L(\cdot)\|_{p}^{\delta,\theta} = \|L(\cdot)\|_{p}^{\delta,T_{\vec{a}}(\theta)}$ for any $\vec{a}$ without negative element.
	\label{thm:invariant}
\end{theorem}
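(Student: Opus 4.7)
The plan is to exploit three structural facts already established in the excerpt: (i) the scale invariance $L(\theta)=L(T_{\vec a}(\theta))$ from Section 2, (ii) the linearity of $T_{\vec a}$, and (iii) the fact that the direction set $\phi(\theta)$ is defined \emph{componentwise} in terms of the $\|\theta_i\|$. Together these will show that the map $v\mapsto T_{\vec a}(v)$ is a bijection from $\phi(\theta)$ onto $\phi(T_{\vec a}(\theta))$ along which the inner integrand of \eqref{eq:Lpsharpness} is unchanged, so the suprema must coincide.

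First I would check the bijection. For a batch-normalized index $i\le N_1$, any $v\in\phi(\theta)$ has $\|v_i\|=\|\theta_i\|$, hence $\|(T_{\vec a}(v))_i\|=a_i\|v_i\|=a_i\|\theta_i\|=\|(T_{\vec a}(\theta))_i\|$, which is precisely the constraint defining $\phi(T_{\vec a}(\theta))$. For $j>N_1$ the transformation $T_{\vec a}$ acts as the identity, so the unit-norm sum $\sqrt{\sum_{j>N_1}\|v_j\|^2}=1$ is trivially preserved. Since each $a_i>0$, the map is invertible by $T_{\vec a^{-1}}$, giving the required bijection $\phi(\theta)\leftrightarrow\phi(T_{\vec a}(\theta))$.

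Next I would match integrands along paired directions. By linearity, $T_{\vec a}(\theta)+t\,T_{\vec a}(v)=T_{\vec a}(\theta+tv)$, and scale invariance of $L$ then yields
\[
L\bigl(T_{\vec a}(\theta)+t\,T_{\vec a}(v)\bigr)-L\bigl(T_{\vec a}(\theta)\bigr) \;=\; L(\theta+tv)-L(\theta)
\]
for every $t\in[-\delta,\delta]$. Substituting into \eqref{eq:Lpsharpness}, the inner $L^p$ integral evaluated at $v\in\phi(\theta)$ equals the one evaluated at $T_{\vec a}(v)\in\phi(T_{\vec a}(\theta))$, and taking suprema over the two matched sets produces the desired equality.

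The only real obstacle is bookkeeping: one must confirm that the per-component norm constraints in $\phi(\cdot)$ interact compatibly with the block structure of $T_{\vec a}$. Had $\phi$ been defined with a single global constraint such as $\|v\|=\|\theta\|$, the argument would break down, since the image $T_{\vec a}(v)$ would not sit in $\phi(T_{\vec a}(\theta))$. Thus the conceptual heart of the proof is really a verification that Definition~\ref{def:Lpsharpness} has been tailored so that the constraint set transforms equivariantly with $T_{\vec a}$; once that is in place, the rest is a change-of-variables argument with no delicate estimation involved.
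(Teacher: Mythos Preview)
Your proof is correct and follows essentially the same route as the paper: use linearity of $T_{\vec a}$ together with the scale invariance $L(T_{\vec a}(\cdot))=L(\cdot)$ to identify the integrand at $v\in\phi(\theta)$ with the integrand at $T_{\vec a}(v)\in\phi(T_{\vec a}(\theta))$, then take suprema. If anything, you are more explicit than the paper in verifying that $v\mapsto T_{\vec a}(v)$ is a bijection $\phi(\theta)\to\phi(T_{\vec a}(\theta))$; the paper's proof simply asserts the equality of the two suprema without spelling out the componentwise check.
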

\begin{proof}
	\emph{For any $a\neq0$, we notice that $L(\theta)=L(T_{\vec{a}}(\theta))$. Therefore for any $v$ with $v\in\phi(\theta)$, we have
	\begin{equation}\small\small
	L(\theta+tv)-L(\theta)=L(T_{\vec{a}}(\theta)+tT_{\vec{a}}(v))-L(T_{\vec{a}}(\theta)),
	\end{equation}\small
	for $t\in[-\delta,\delta]$. It's easily to verify that
	\begin{equation}
	\begin{aligned}
	&\int_{-\delta}^{\delta}\left|\frac{L(\theta+tv)-L(\theta)}{\delta}\right|^{p}dt\\
	& = \int_{-\delta}^{\delta}\left|\frac{L(T_{\vec{a}}(\theta)+tT_{\vec{a}}(v))-L(T_{\vec{a}}(\theta))}{\delta}\right|^{p}dt
	\end{aligned}
	\end{equation}
	Since
	\begin{equation}\small\small
	\begin{aligned}
	&\mathop{\sup}_{v\in\phi(\theta)}\frac{1}{\delta^{\frac{1}{p}}}\left(\int_{-\delta}^{\delta}\left|\frac{L(T_{\vec{a}}(\theta)+tT_{\vec{a}}(v))-L(T_{\vec{a}}(\theta))}{\delta}\right|^{p}dt\right)^{\frac{1}{p}}\\
	&=\mathop{\sup}_{v\in\phi(T_{\vec{a}}(\theta))}\frac{1}{\delta^{\frac{1}{p}}}\left(\int_{-\delta}^{\delta}\left|\frac{L(T_{\vec{a}}(\theta)+tv)-L(T_{\vec{a}}(\theta))}{\delta}\right|^{p}dt\right)^{\frac{1}{p}}\\
	&=\|L(\cdot)\|_{p}^{\delta,T_{\vec{a}}(\theta)},
	\end{aligned}
	\end{equation}
	we get the conclusion.}
\end{proof}
\par
Actually, we can derive a relationship between BN-Sharpness and generalization error. The result is based on PAC Bayesian theory \cite{PAC1,PAC2}.
\begin{theorem}
	Given a "prior" distribution $P$(for parameter $\theta$) over the hypothesis that is independent of the training data, with probability at least $1-\varepsilon$, we have
	\begin{equation}\small
	\begin{aligned}
	|\mathbb{E}_{u}[L(\theta+u)] -\hat{L}(\theta)|&\leq\delta^{1+\frac{1}{p}}\|L(\cdot)\|_{p}^{\delta,\theta} \\
	&+ 4\sqrt{\frac{1}{m}\left(KL(\theta+u||P)+\log{\frac{2m}{\varepsilon}}\right)},
	\end{aligned}
	\label{eq:generalization_bound}
	\end{equation}
	where $L(\cdot)$ and $\hat{L}(\cdot)$ are respectively expected loss and training loss, $m$ is the number of training data and $\theta$ is the parameter learned from training data. As long as $u$ is an uniform distribution on any specific $v\in\delta\cdot\phi(\theta)$.
\end{theorem}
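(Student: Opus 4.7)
The plan is to decompose $\mathbb{E}_u[L(\theta+u)] - \hat{L}(\theta)$ into a PAC-Bayes generalisation gap and a perturbation-smoothing gap, handle the first by the McAllester-style PAC-Bayes inequality, and handle the second by direct appeal to the BN-sharpness definition. Concretely, I would start from the triangle inequality applied to
\begin{equation*}
\mathbb{E}_u[L(\theta+u)] - \hat{L}(\theta) = \big(\mathbb{E}_u[L(\theta+u)] - \mathbb{E}_u[\hat{L}(\theta+u)]\big) + \big(\mathbb{E}_u[\hat{L}(\theta+u)] - \hat{L}(\theta)\big).
\end{equation*}

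For the first summand, I would invoke the standard PAC-Bayes bound with posterior $Q$ equal to the law of $\theta+u$ (i.e., uniform on $\{\theta + tv : t \in [-\delta,\delta]\}$) and with the given data-independent prior $P$. This gives, with probability at least $1-\varepsilon$ over the draw of the training set, exactly the square-root term in the statement, namely $4\sqrt{(KL(\theta+u\|P) + \log(2m/\varepsilon))/m}$.

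For the second summand, I would bound $|\mathbb{E}_u[\hat{L}(\theta+u)] - \hat{L}(\theta)|$ by $\mathbb{E}_u[|\hat{L}(\theta+u)-\hat{L}(\theta)|]$, then apply Jensen's inequality with $x \mapsto x^p$ and unfold the resulting expectation as a one-dimensional integral along $v$:
\begin{equation*}
\mathbb{E}_u[|\hat{L}(\theta+u)-\hat{L}(\theta)|] \leq \left(\tfrac{1}{2\delta}\int_{-\delta}^{\delta}|\hat{L}(\theta+tv)-\hat{L}(\theta)|^p\, dt\right)^{1/p}.
\end{equation*}
Taking the supremum over $v \in \phi(\theta)$ and pulling out the normalisation factor $\delta$ from inside the BN-sharpness definition recognises this quantity as $\delta^{1+1/p}\|L(\cdot)\|_p^{\delta,\theta}$ times an absolute constant; the passage from empirical sharpness $\|\hat{L}\|$ to population sharpness $\|L\|$ as written in the statement is absorbed into the PAC-Bayes term (or into a mild uniform-convergence argument that couples the two quantities).

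The main obstacle I expect is controlling $KL(\theta+u \| P)$ meaningfully. Because $u$ is supported on a one-dimensional segment in $\mathbb{R}^N$, its law is singular and the KL divergence against any full-dimensional prior is infinite, which would make the stated bound vacuous at face value. Two standard workarounds are available: either choose $P$ itself supported on the same affine subspace (while taking care that $P$ remains genuinely data-independent), or mollify $u$ by a vanishingly small isotropic Gaussian and pass to a limit after applying PAC-Bayes. Selecting a concrete prior that yields a finite, data-independent KL while still matching the form of the stated bound, together with book-keeping the exact constants produced by Hölder's inequality so as to recover the $\delta^{1+1/p}$ factor, is the delicate step of the argument.
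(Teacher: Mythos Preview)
Your approach is correct and matches the paper's: the paper does not give a proof but simply declares the result ``a direct corollary according to equation (6) in McAllester and the definition of BN-Sharpness,'' which is precisely the PAC-Bayes-plus-perturbation decomposition you outline (the standard Neyshabur-style argument). The technical caveats you raise---the singularity of the KL divergence when the posterior is supported on a one-dimensional segment, and the $\hat L$ versus $L$ mismatch in the sharpness term produced by the natural decomposition---are genuine and are left unaddressed in the paper as well.
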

A straightforward connection between generalization error $|L(\theta) - \hat{L}(\theta)|$ and sharpness is nontrivial. We can only derive a perturbed generalization error $|\mathbb{E}_{u}[L(\theta+u)] -\hat{L}(\theta)|$, where $u$ is the perturbation variable. A small perturbation variable $u$ will make the perturbed generalization error close to the real generalization error which involves a small $\delta$ in equation \eqref{eq:generalization_bound}. We claim that the result gives a quantitatively description of generalization error based on BN-Sharpness. It's a direct corollary according to the equation (6) in McAllester \cite{PAC3} and the definition of BN-Sharpness. %The theorem theoretically confirms that a minimum with smaller BN-Sharpness indeed generalizes better which motivate us to reduce BN-Sharpness during optimization process.
\section{Regularizing Training with BN-Sharpness}
\noindent
We already have an appropriate definition of BN-Sharpness. It has also been confirmed that smaller BN-Sharpness leads to better generalization. We naturally consider leveraging it to find a flat minimum generalizes better. %In this section, we derive an optimization. Meanwhile, we give a discussion of its properties.
\par
%\subsection{Regularized by BN-Sharpness} 
%The shortage of algorithm regularized by sharpness suffers from the ill-posed definitions of sharpness as well as the computation complexity of these definitions. Hence lots of local structured algorithms proposed to produce flat minima avoid the definitions of sharpness. They alter loss function as a local integral instead of a single point value\cite{entropy_sgd, smooth_out, HuanWang}. An intuitive explanation of this method is making minima locate in a small neighborhood value valley which is exactly a flat valley. However, there are two problems of this method: 1): Calculating the loss function(a high dimensional integral) accurately need a lot computation effort; 2): The method may leads to an inexactly loss objection since the loss function is totally change, which will opposite reduce prediction accuracy.    
%\par
Intuitively, we consider using BN-Sharpness as a regularization term tend to produce flat minima for BN neural network. %The method based on the two facts: First, BN-Sharpness is scale invariant which is appropriate to describe sharpness of BN-network; Second, BN-Sharpness is an one dimensional integral which is much easier to be calculated comparing to the $n$ dimensional integral ($n$ is the dimension of parameter) used in \cite{entropy_sgd,smooth_out,HuanWang}. 
We substitute the optimization problem $\mathop{\min}_{\theta} L(\theta)$ with 
%\begin{equation}\small
%\begin{aligned}
%&\mathop{\min}_{\theta}L(\theta)\\
%\text{s.t.}\qquad &\|L(\theta^{\prime})\|_{p}^{\delta,\theta}\leq M,
%\end{aligned}
%\end{equation}
%for given $\delta,M$ and $p$. Considering a slack problem which treat constraint as a regularization term.
\begin{equation}\small
\mathop{\min}_{\theta}L(\theta)+\lambda(\|L(\theta^{\prime})\|_{p}^{\delta,\theta})^{p}.
\label{eq:optimization problem}
\end{equation}
An interesting phenomenon is such regularization term not only computational efficient and appropriate but also well-posed. More specifically, traditional regularization term such as $l_{1},l_{2}$ would change the minimums set of loss function. But BN-Sharpness regularization term keeps minima of original loss function in the minimal point set of regularized loss function, while it force iterates move to a flat valley. The next theorem states that regularization term in \eqref{eq:optimization problem} wouldn't remove minima of original loss function when $\delta$ is small.
\begin{theorem}
	The minima of problem $\mathop{\min}_{\theta}L(\theta)$ are also minima of optimization problem (\ref{eq:optimization problem}), when $\delta\rightarrow0$ and $p\geq 1$.
	\label{thm:optimization_problem}
\end{theorem}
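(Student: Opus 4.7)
The plan is to show that as $\delta\to 0$, the BN-sharpness regularizer vanishes at every minimum of $L$, while it remains non-negative everywhere, so that minima of $L$ automatically achieve the infimum of the regularized objective in the limit.

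First I would reduce the statement to a claim about $\theta^*$ alone. Let $\theta^*$ be a minimum of $L$, so that $L(\theta)\ge L(\theta^*)$ for every $\theta$. Since $(\|L(\cdot)\|_p^{\delta,\theta})^p\ge 0$, we immediately obtain
\begin{equation*}
L(\theta)+\lambda\bigl(\|L(\cdot)\|_p^{\delta,\theta}\bigr)^p \ \ge\ L(\theta)\ \ge\ L(\theta^*)
\end{equation*}
for all $\theta$. It therefore suffices to prove that $\bigl(\|L(\cdot)\|_p^{\delta,\theta^*}\bigr)^p\to 0$ as $\delta\to 0$, since then the regularized objective at $\theta^*$ converges to the lower bound $L(\theta^*)$, so $\theta^*$ is still a minimizer in the limit.

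Next I would exploit the first order optimality condition $\nabla L(\theta^*)=0$ together with a second order Taylor expansion around $\theta^*$. Assuming $L$ is $C^2$, this yields $|L(\theta^*+tv)-L(\theta^*)|\le \tfrac{1}{2}\|H(\theta^*)\|\,\|v\|^2 t^2 + o(t^2)$ for any direction $v$, where $H(\theta^*)$ denotes the Hessian. Since $\phi(\theta^*)$ is a product of spheres of fixed radii (hence compact) and every $v\in\phi(\theta^*)$ satisfies $\|v\|=\|\phi(\theta^*)\|$, this expansion is uniform in $v$, giving a constant $C=C(\|H(\theta^*)\|,\|\phi(\theta^*)\|)$ with $|L(\theta^*+tv)-L(\theta^*)|\le Ct^2$ for $|t|\le\delta$ and $v\in\phi(\theta^*)$ whenever $\delta$ is small enough.

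Plugging this uniform estimate into the definition of BN-sharpness then gives
\begin{equation*}
\bigl(\|L(\cdot)\|_p^{\delta,\theta^*}\bigr)^p = \sup_{v\in\phi(\theta^*)}\frac{1}{\delta^{p+1}}\int_{-\delta}^{\delta}\bigl|L(\theta^*+tv)-L(\theta^*)\bigr|^p dt \le \frac{C^p}{\delta^{p+1}}\int_{-\delta}^{\delta}t^{2p}\,dt = \frac{2C^p\delta^p}{2p+1},
\end{equation*}
which tends to $0$ as $\delta\to 0$ for every $p\ge 1$, concluding the argument. The main obstacle is the uniform control of the Taylor remainder over $v\in\phi(\theta^*)$, which is handled by compactness of $\phi(\theta^*)$ together with a $C^2$ assumption on $L$; a secondary subtlety is the interpretation of \emph{minimum} (global versus local), but the same argument adapts to local minima by localizing the comparison $L(\theta)\ge L(\theta^*)$ to a neighborhood containing $\theta^*+tv$ for $v\in\phi(\theta^*)$ and $|t|\le\delta$.
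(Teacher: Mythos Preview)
Your argument is correct and follows essentially the same route as the paper: use non-negativity of the regularizer together with a Taylor expansion and the first-order condition $\nabla L(\theta^*)=0$ to show that the BN-sharpness term vanishes at minima as $\delta\to 0$. The only cosmetic difference is that the paper carries out a first-order expansion at a generic $\theta$ (obtaining $\lim_{\delta\to 0}\|L(\cdot)\|_p^{\delta,\theta}=(2/(p+1))^{1/p}\sup_{v\in\phi(\theta)}|v^{T}\nabla L(\theta)|$) and then specializes to $\nabla L=0$, whereas you work directly at $\theta^*$ with a second-order expansion; your handling of uniformity over $v\in\phi(\theta^*)$ via compactness is in fact more careful than the paper's.
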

%\begin{proof}
%	We see $L(\theta)+\lambda(\|L(\theta^{\prime})\|_{p}^{\delta,\theta})^{p}\geq L(\theta)$. By Taylor's expansion, the equal sign holds when $\theta$ is a minimum. Then, we get the conclusion. Detailed proof can be found in the supplement material.$\hfill\blacksquare$
%\end{proof}
\par
%We argue that choosing BN-Sharpness as a regularization term has three advantages 1): The optimization target is well defined; 2): The loss function is computational efficient; 3): It enhancing generalization ability without changing the minimal point set.
%\subsection{Other alternative loss function}
%We have already discussed that optimization objective of equation \eqref{eq:optimization problem} is an intuitively and computable method to produce  flat minima. But there are other loss function can be used by borrowing the basic idea of BN-Sharpness. For example, we can set loss function as
%\begin{equation}\small
%\mathop{\sup}_{v:\|v\|=\|\theta\|}\int_{-\delta}^{\delta}L(\theta+tv)dt.
%\label{eq:alternative}
%\end{equation}
%But we don't use such formulation based on the two following points. 
\par
The optimization problem \eqref{eq:optimization problem} is actually a multi-target programming with "accuracy target" and "flatness target". $\lambda$ in \eqref{eq:optimization problem} is actually a proportion between the two terms. Since we aim to find "flat" minima rather than "flat" point, an appropriate proportion between the two purpose is crucial. %Therefore, we claim that using a "BN"-sharpness as regularization term separate the two unnecessary related purposes is a wise choice since an inappropriate proportion between the two terms will easily force iterates stack in a flat valley even the loss is still high.   
%But the two unnecessary relative optimization objectives mixture together in equation \eqref{eq:alternative}, it forbid us to dynamically adjust the proportion between "accuracy" and "sharpness", because we can't adjust $\delta$($\delta$ small is the precondition of sharpness), although we aim to find "flat" minima rather than "flat" point. Therefore, we consider equation \eqref{eq:optimization problem} is a wiser decision.% Although formulating loss function like \eqref{eq:alternative} may finally convergence to a considerable minimum but it can cost more iterations\cite{HuanWang}.
%\par
%More specifically, though we admit that dynamically balance the two purposes is an interesting and changeling topic. Just like the training procedure in Generative Adversarial Networks \cite{GAN}, it also need a cautious balance between generator and discriminator. But the detailed balancing strategy is not the main topic of this paper. We just use a simple but relatively effective method in experiments we did (Section 6).
\par
%Besides that, the approximation of BN-Sharpness's gradient in Theorem \ref{thm:approximate} highly rely on the difference structure of BN-Sharpness. We can't easily get a similar result of Theorem \ref{thm:approximate} for the gradient of equation \eqref{eq:alternative}.      
%\subsection{BN-Sharpness Based Algorithm}
Now, we give a computational efficient algorithm to solve the optimization problem \eqref{eq:optimization problem}. Here we simply explain our algorithm flow. We notice that the obstacle of optimization problem \eqref{eq:optimization problem} is calculating the gradient of regularization term $(\|L(\theta^{\prime})\|_{p}^{\delta,\theta})^{p}$ (BN-Sharpness). It involves two steps: First, calculating the "sharpest" direction which we need optimization on Oblique manifold; Second, computing the gradient of integral term in BN-Sharpness under the "sharpest" direction.
\par
Now we give the complete flow of our algorithm: Algorithm \ref{alg1}. Then we will discuss more details about it.
\begin{algorithm}
	\caption{SGD with BN-Sharpness regularization.}
	\label{alg1}
	\begin{algorithmic}
		\STATE {Input $\delta>0$, even number $p$, initialize point $\theta^{0}\in\mathbb{R}_{n}$, $v_{0}\in\phi(\theta^{0})$ set to be the gradient direction like equation \eqref{eq:sharpnest}, regularization coefficient $\lambda$, iterations $K_{1},K_{2}$ and learning rate $\eta$.}
		\STATE {{\bfseries Optimization with BN-Sharpness term}}
		\STATE {$k_{2}=0$}%,h_{2}(\theta^{0},v^{0},\delta,p,\lambda)$}
		\WHILE {$k_{2}\leq K_{2}$ {\bfseries and} $ \nabla_{\theta} L(\theta^{k_{2}})\neq0$}
		\STATE {{\bfseries Iterate $K_{1}$ times to search the sharpest direction $v^{k_{1}}(\theta^{k_{2}})$ in point $\theta^{k_{2}}$}}
		%\WHILE {$k_1\leq K_{1}$ {\bfseries and} $h_{2}(\theta^{k_{2}},v^{k_{1}},\delta,p)\neq0$}
		%\FOR   {$i=1,\cdots,N_{1}$}
		%\STATE {$v^{k_{1}}_{i}={\rm Retr}^{\|\theta^{k_{2}}_{i}\|}_{v^{k_{1}}_{i}}\left(P^{\|\theta^{k_{2}}_{i}\|}_{v^{k_{1}}_{i}}(h_{2}^{i}(\theta^{k_{2}},v^{k_{1}},\delta,p))\right)$(${\rm Retr}^{\|\theta^{k_{2}}_{i}\|}_{v^{k_{1}}_{i}}$ and $P^{\|\theta_{k_{2}}^{i}\|}_{v_{k_{1}}^{i}}$) are defined in equation (\ref{eq:grad})}
		%\ENDFOR
		%\FOR   {$i=N_{1},\cdots,N$}
		%\STATE {$v^{k_{1}}_{i}={\rm Retr}^{1}_{v^{k_{1}}_{i}}\left(P^{1}_{v^{k_{1}}_{i}}(h_{2}^{i}(\theta^{k_{2}},v^{k_{1}},\delta,p))\right)$}
		%\ENDFOR
		%\STATE {$k_{1}=k_{1}+1$}
		%\ENDWHILE
		\STATE {$\theta^{k_{2}+1}=\theta^{k_{2}}-\eta(\nabla_{\theta} L(\theta^{k_{2}})+h_{1}(\theta^{k_{2}},v^{k_{1}}(\theta^{k_{2}}),\delta,p,\lambda))$, $k_{2}=k_{2}+1,v_{0}\in\phi(\theta_{k_{2}})$}
		%\STATE {$k_{2}=k_{2}+1,v_{0}\in\phi(\theta_{k_{2}})$}
		\ENDWHILE
		\RETURN{$\theta_{k_{2}}$}
	\end{algorithmic}
\end{algorithm}
\par
In Algorithm \ref{alg1}, the first inner loop is calculating the "sharpest" direction which is the process of optimization on Oblique manifold. The second inner loop is general gradient descent to update parameter $\theta$. In addition, we make an approximation to $\lambda \nabla_{\theta}\frac{1}{\delta}\int_{-\delta}^{\delta}\left(\frac{L(\theta+tv)-L(\theta)}{\delta}\right)^{p}dt$ and  $\nabla_{v}\frac{1}{\delta}\int_{-\delta}^{\delta}\left(\frac{L(\theta+tv)-L(\theta)}{\delta}\right)^{p}dt$ in equation \eqref{eq:approximate} and \eqref{eq:gradient approximate} to further reduce computational redundancy. We respectively denote them as $h_{1}(\theta,v,\delta,p,\lambda)$ and $h_{2}(\theta,v,\delta,p)$. 
\par
%\subsection{Approximate gradient of BN-Sharpness}
\subsection{Searching the Sharpest Direction $v$}
\noindent
The extra computation cost comparing to vanilla SGD is the procedure of searching the "sharpest" direction $v$ which is the second inner loop in Algorithm \ref{alg1}. Inspiring by equation (\ref{eq:sharpnest}), we can also choose $v_{k}\in\phi(\theta_{k})$ with $v_{k}^{i}$ has same direction with $\nabla L(\theta_{k})_{i}$ for each step to avoid the searching process (Iterate $K_{1}$ times in Algorithm \ref{alg1}). %It means considering the gradient direction as the sharpest direction instead of searching it. 
\par
However, there is a more accurate searching procedure which is optimization on Oblique manifold. Searching $v$ in BN-Sharpness (\ref{eq:Lpsharpness}) is equivalent to solving optimization problem:
\begin{equation}\small
\mathop{\arg\max}_{v\in\phi(\theta)} \frac{1}{\delta^{\frac{1}{p}}}\left(\int_{-\delta}^{\delta}\left|\frac{L(\theta+tv)-L(\theta)}{\delta}\right|^{p}dt\right)^{\frac{1}{p}}.
\label{eq:optimization on oblique}
\end{equation}
This is a constrained optimization problem which can be converted to optimization on manifold. We briefly present optimization on manifold here. Detailed introduction can be referred to \cite{Boumal}.
%Optimization on manifold is convert the constraint condition of an optimization problem to a non-constraint problem defined on manifold. The point on such manifold always satisfy constraint. 
\par
Optimization on manifold converts a constrained optimization problem into an un-constraint problem while iterates locate in a manifold satisfy the constraint. Specific definition of manifold $\mathcal{M}$ can be found in \cite{Absil}. Here we only consider matrix manifold i.e. a subspace of Euclidean space. Solving problem \eqref{eq:optimization on oblique} needs gradient assent on manifold which produce iterates as
\begin{equation}\small
v_{k+1}={\rm Retr}_{v_{k}}\left(\frac{1}{L}{\rm grad}f(v_{k})\right). \label{eq:gdmanifold}
\end{equation}
Here ${\rm grad}f(x)$ is Riemannian gradient which is a map from tangent space $T_{x}\mathcal{M}$ of point $x$ to manifold $\mathcal{M}$. %Riemannian gradient here is the orthogonal projection of gradient $\nabla f(x)$ to tangent space $T_{x}\mathcal{M}$. 
The optimization problem \eqref{eq:optimization on oblique} defined on the general Oblique manifold. We need the related formulations in Oblique manifold to process our algorithm.
\begin{definition}[Oblique manifold]
	Oblique manifold is a subset of Euclidean space satisfy ${\rm St}(n,p)=\{X\in\mathbb{R}^{n\times p}:{\rm ddiag}(X^{T}X)=I_{p}\}$, where ${\rm ddiag}(\cdot)$ is diagonal matrix of a matrix.
\end{definition}
\par
Apparently, for a given $\theta$ in equation (\ref{eq:optimization on oblique}), $v$ lives in a special Oblique product manifold $\|\theta\|_{1}{\rm St}(n_{1},1)\times\cdots\times\|\theta\|_{N_{1}}{\rm St}(n_{N_{1}},1)\times{\rm St}(\sum_{j=N_{1}+1}^{N}n_{j},1)$ where $n_{i}$ is the dimension of parameter $\theta_{i}$.% $\|\theta_{i}\|{\rm St}(n_{i},1)$ can be regarded as substituting the unit scale of parameter in ${\rm St}(n_{i},1)$ with $\|\theta_{i}\|$. 
\par
As we discussed, optimization on manifold requires ${\rm Retr}_{x}$ and ${\rm grad}f(x)$. The two items on single Oblique manifold $\|\theta\|{\rm St}(n,1)$ respectively are
\begin{equation}\small
\begin{aligned}
{\rm Retr}^{\|\theta\|}_{x}(\eta)&=\frac{x+\eta}{\|x+\eta\|}\|\theta\|,
P_{x}^{\|\theta\|}(\eta)=\eta-\frac{x}{\|\theta\|^{2}}{\rm ddiag}(x^{T}\eta),\\
{\rm grad}f(x)&=P_{x}^{\|\theta\|}\left(\nabla f(x)\right)=\nabla f(x)-\frac{x}{\|\theta\|^{2}}{\rm ddiag}(x^{T}\nabla f(x))
\end{aligned}
\label{eq:grad}
\end{equation}
where $P_{x}^{\|\theta\|}(\cdot)$ is projection matrix of the tangent space at $x$. These results can be derived from the general formulas in \cite{Absil,Absil1}. Then we can generalize it to achieve the update rule in product manifold we used. The procedure is gradually update $v^{k}_{i}$ according to the manifold it located \cite{BN2}. 
\par
In addition, we note that our algorithm involves 
$ \nabla_{\theta}\frac{1}{\delta}\int_{-\delta}^{\delta}\left(\frac{L(\theta+tv)-L(\theta)}{\delta}\right)^{p}dt$ (The gradient descent for parameter $\theta$) and $\nabla_{v}\frac{1}{\delta}\int_{-\delta}^{\delta}\left(\frac{L(\theta+tv)-L(\theta)}{\delta}\right)^{p}dt$ (Searching the sharpest direction). Precise calculation of the two terms will bring some extra computational redundancy. Because $L(\theta)$ usually be a neural network, but calculating the integral term requires sampling value of $L(\theta+tv)$ and $\nabla L(\theta + tv)$. Each sampling corresponds to a forward and backward process of neural network.
The next proposition gives an approximation to the gradient of BN-Sharpness. It reduces the times of froward and backward process to two. In the next proposition, we aim to a specific $v$ and suppose that $v$ is a constant vector.
\begin{proposition}
	Given $\delta>0$, for any $v$ satisfy $\|v\|=\|\phi(\theta)\|$, we have
	\begin{equation}\small
	\small
	\begin{aligned}
	&\Bigg\|\lambda\nabla_{\theta}\frac{1}{\delta}\left(\int_{-\delta}^{\delta}\left(\frac{L(\theta+tv)-L(\theta)}{\delta}\right)^{p}dt\right)-\frac{\lambda}{\delta}(\nabla_{\theta} L(\theta)^{T}v)^{p-1}\\
	&(\nabla_{\theta} L(\theta+\frac{p}{p+1}\delta v)
	+\nabla_{\theta} L(\theta-\frac{p}{p+1}\delta v)-2\nabla_{\theta} L(\theta))\Bigg\|<o(\delta\|v\|),
	\end{aligned}
	\label{eq:approximate}
	\end{equation}
	\label{thm:approximate}
	when $p$ is an even number.
\end{proposition}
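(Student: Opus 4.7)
My plan is to differentiate the integral under the $\theta$-derivative, use a first-order Taylor expansion of $L(\theta+tv)-L(\theta)$ to pull out the weight $t^{p-1}$, exploit the parity of $p$ to eliminate the $\nabla_\theta L(\theta)$ contribution, and finally invoke a weighted mean-value argument whose centroid is precisely $\frac{p}{p+1}\delta$.

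Concretely, differentiating under the integral sign gives
\begin{equation*}
\nabla_\theta\frac{1}{\delta}\int_{-\delta}^\delta\left(\frac{L(\theta+tv)-L(\theta)}{\delta}\right)^{p}dt = \frac{p}{\delta^{p+1}}\int_{-\delta}^\delta (L(\theta+tv)-L(\theta))^{p-1}\bigl(\nabla_\theta L(\theta+tv)-\nabla_\theta L(\theta)\bigr)\,dt.
\end{equation*}
A first-order Taylor expansion $L(\theta+tv)-L(\theta) = t\,\nabla_\theta L(\theta)^T v + O(t^2\|v\|^2)$ lets me rewrite $(L(\theta+tv)-L(\theta))^{p-1} = t^{p-1}(\nabla_\theta L(\theta)^T v)^{p-1} + O(t^p\|v\|^p)$. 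Substituting back, the dominant term is $\frac{p}{\delta^{p+1}}(\nabla_\theta L(\theta)^T v)^{p-1}\int_{-\delta}^\delta t^{p-1}\bigl(\nabla_\theta L(\theta+tv)-\nabla_\theta L(\theta)\bigr)\,dt$. Since $p$ is even, $t^{p-1}$ is odd and $\int_{-\delta}^\delta t^{p-1}dt=0$, so the constant piece integrates to zero on its own.

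To approximate the remaining $\int_{-\delta}^\delta t^{p-1}\nabla_\theta L(\theta+tv)\,dt$, I would split at $0$ and, on the negative half, substitute $t\mapsto -t$ to reduce everything to integrals over $[0,\delta]$. The nonnegative weight $t^{p-1}$ on $[0,\delta]$ has total mass $\delta^p/p$ and centroid $\int_0^\delta t\cdot t^{p-1}dt\,/\,\int_0^\delta t^{p-1}dt = \frac{p}{p+1}\delta$, which is exactly the shift appearing in the statement. A second-order Taylor expansion of $\nabla_\theta L(\theta+tv)$ about this centroid has a linear correction which integrates against $(t-\frac{p}{p+1}\delta)\cdot t^{p-1}$, and I would bound the remainder by $O(\delta^{p+2}\|v\|^2)$ using boundedness of the Hessian. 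Reassembling the two halves with the subtracted $\nabla_\theta L(\theta)$ piece produces the point-evaluations $\nabla_\theta L\bigl(\theta\pm\tfrac{p}{p+1}\delta v\bigr)$ together with the constant offset $-2\nabla_\theta L(\theta)$ from the statement, and the $1/\delta^{p+1}$ prefactor then converts the accumulated residual into an $o(\delta\|v\|)$ bound.

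The step I expect to require most care is the simultaneous bookkeeping of the two sources of error: the $O(t^p\|v\|^p)$ slippage from approximating $(L(\theta+tv)-L(\theta))^{p-1}$ by $t^{p-1}(\nabla_\theta L(\theta)^T v)^{p-1}$ (which is then multiplied by the $O(t\|v\|)$ factor $\nabla_\theta L(\theta+tv)-\nabla_\theta L(\theta)$ and integrated), and the $O(\delta^{p+2}\|v\|^2)$ residual from the weighted mean-value substitution. Both must be shown, after the $1/\delta^{p+1}$ normalization, to shrink strictly faster than $\delta\|v\|$, which essentially uses $L\in C^2$ with its Hessian uniformly bounded on a neighborhood of $\{\theta+tv:|t|\le\delta\}$ and the fact that $\|v\|=\|\phi(\theta)\|$ is fixed as $v$ ranges over $\phi(\theta)$.
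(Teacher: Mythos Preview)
Your plan is essentially the paper's proof: differentiate under the integral, replace $(L(\theta+tv)-L(\theta))^{p-1}$ by $t^{p-1}(\nabla_\theta L(\theta)^{T}v)^{p-1}$ via a first-order Taylor expansion, split the range at $0$, and expand $\nabla_\theta L(\theta+tv)$ about the centroid $\tfrac{p}{p+1}\delta$ of the weight $t^{p-1}$ on $[0,\delta]$ so that the linear correction integrates to zero. The paper invokes exactly the identity $\int_{0}^{\delta}t^{p-1}\bigl(t-\tfrac{p}{p+1}\delta\bigr)\,dt=0$ that your centroid remark encodes.

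There is one internal inconsistency in your reassembly step, though. You (correctly) observe that $\int_{-\delta}^{\delta}t^{p-1}\,dt=0$ kills the constant $\nabla_\theta L(\theta)$ contribution, yet then assert that it resurfaces as the $-2\nabla_\theta L(\theta)$ in the stated formula. It cannot do both. If you track the signs carefully---on $[-\delta,0]$ the weight $t^{p-1}$ is \emph{negative} since $p-1$ is odd, so after your substitution $t\mapsto -t$ the two half-integrals carry opposite signs---the $\nabla_\theta L(\theta)$ pieces cancel and the leading term is the central first difference
\[
\frac{1}{\delta}\bigl(\nabla_\theta L(\theta)^{T}v\bigr)^{p-1}\Bigl[\nabla_\theta L\bigl(\theta+\tfrac{p}{p+1}\delta v\bigr)-\nabla_\theta L\bigl(\theta-\tfrac{p}{p+1}\delta v\bigr)\Bigr],
\]
not the symmetric second difference printed in the statement. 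The paper's own derivation reaches the printed formula only via a sign slip: it claims $\int_{-\delta}^{0}t^{p-1}\bigl(t-\tfrac{p}{p+1}\delta\bigr)\,dt=0$, whereas for even $p$ this integral equals $\tfrac{2\delta^{p+1}}{p+1}$ (the vanishing-moment node on the left half is $-\tfrac{p}{p+1}\delta$, not $+\tfrac{p}{p+1}\delta$). So your plan and your parity observation are right; just do not bend the final bookkeeping to reproduce the $-2\nabla_\theta L(\theta)$ term.
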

%\begin{proof}
%	The intuitive explanation of the proposition is using mean value theorem to appropriate integral. The specific number needs carefully design. Detailed proof is referred to supplement material. $\hfill\blacksquare$
%\end{proof}
Proposition \ref{thm:approximate} indicates the gradient respect to parameter $\theta$ can be replaced by a computational efficient term. 
On the other hand, we can achieve a similar result presented as  
\begin{equation}\small\small
\begin{aligned}
&\Bigg\|\nabla_{v}\frac{1}{\delta}\left(\int_{-\delta}^{\delta}\left(\frac{L(\theta+tv)-L(\theta)}{\delta}\right)^{p}dt\right)-\frac{p}{p+1}(\nabla_{\theta} L(\theta)^{T}v)^{p-1}\\
&[\nabla L_{v}(\theta+\frac{p+1}{p+2}\delta v)+\nabla_{v} L(\theta-\frac{p+1}{p+2}\delta v)]\Bigg\|<o(\delta\|v\|).
\end{aligned}
\label{eq:gradient approximate}
\end{equation}
for $v$. The equation gives an approximation for the gradient respect to vector $v$ locate in a product Oblique manifold.
\par 
%Up till now, we can find the solution of problem (\ref{eq:optimization problem}) after searching the sharpest direction $v$.  
%\begin{equation}\small
%\begin{aligned}
%\frac{p}{p+1}(\nabla_{\theta} &L(\theta)^{T}v)^{p-1}(\nabla L_{v}(\theta+\frac{p+1}{p+2}\delta v)\\
%&+\nabla_{v} L(\theta-\frac{p+1}{p+2}\delta v))=h_{1}(\theta,v,\delta,p),
%\label{eq:gradient_e}
%\end{aligned}
%\end{equation}
%\begin{equation}\small
%\begin{aligned}
%&\frac{\lambda}{\delta}(\nabla_{\theta} L\theta^{T}v)^{p-1}[\nabla_{\theta} L\left(\theta+\frac{p}{p+1}\delta v\right)\\
%&+\nabla_{\theta} L\left(\theta-\frac{p}{p+1}\delta v\right)-2\nabla_{\theta} L(\theta)]=h_{2}(\theta,v,\delta,p,\lambda).
%\label{eq:gradient}
%\end{aligned}
%\end{equation} 
%Solving problem (\ref{eq:optimization problem}) needs $\nabla_{\theta}(\|L(\theta^{\prime})\|_{p}^{\delta,\theta})^{p}$. For any $\|v\|=\|\phi(\theta)\|$, calculating
%\[\nabla_{\theta}\frac{1}{\delta}\left(\int_{-\delta}^{\delta}\left|\frac{L(\theta+tv)-L(\theta)}{\delta}\right|^{p}dt\right)\]
%Theorem \ref{thm:approximate} indicates the gradient can be replaced by a computational efficient term.
%\begin{equation}\small
%\begin{aligned}
%\frac{1}{\delta}(\nabla_{\theta} L(\theta)^{T}v)^{p-1}[\nabla_{\theta} L\left(\theta+\frac{p}{p+1}\delta v\right)\\
%+\nabla_{\theta} L\left(\theta-\frac{p}{p+1}\delta v\right)-2\nabla_{\theta} L(\theta)]
%\label{eq:subs}
%\end{aligned}
%\end{equation}
\begin{figure}[H]\centering
	\begin{subfigure}[b]{0.235\textwidth}
		\includegraphics[width=\textwidth]{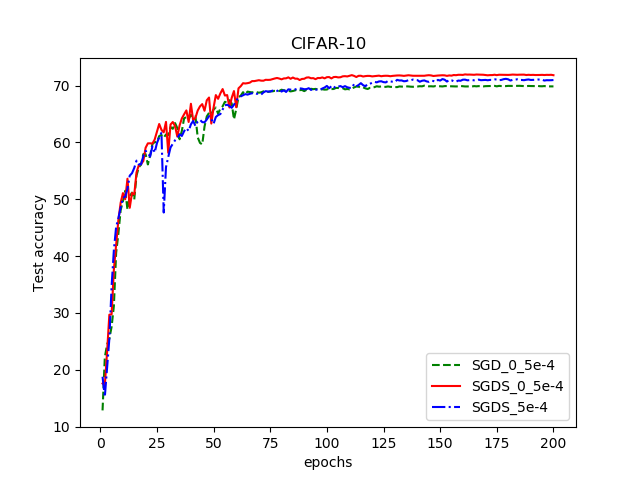}
		\caption{Accuracy of LeNet on CIFAR10}
		\label{fig:lenet}
	\end{subfigure}
	\begin{subfigure}[b]{0.235\textwidth}
		\includegraphics[width=\textwidth]{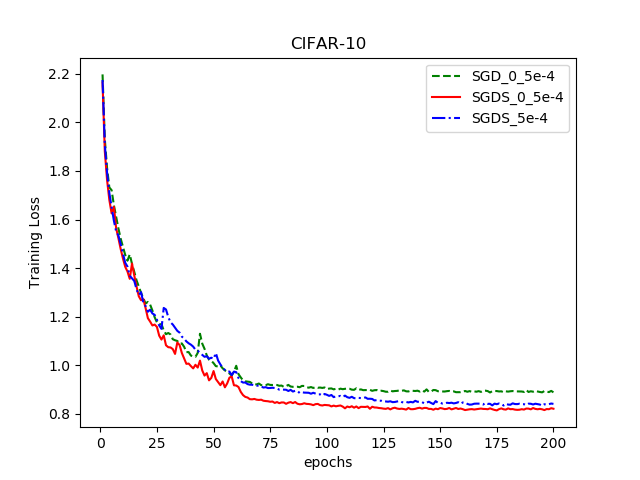}
		\caption{Training loss of LeNet on CIFAR10}
		%\label{fig:lenet}
	\end{subfigure}
	\caption{Performance on LeNet}
\end{figure}
We will use the two approximation terms to substitute the original gradients of BN-Sharpness in our algorithm. Finally, we point out that the sharpest direction $v_{k}$ of point $\theta$ in Algorithm \ref{alg1} is produced as
\begin{equation}\small
\begin{aligned}
v^{k}_{i}&={\rm Retr}^{\|\theta_{i}\|}_{v^{k-1}_{i}}\left(P^{\|\theta_{i}\|}_{v^{k-1}_{i}}(h_{2}^{i}(\theta,v^{k-1},\delta,p))\right), i=1,\cdots N_{1}\\
v^{k}_{*} &= {\rm Retr}^{1}_{v^{k-1}_{*}}\left(P^{1}_{v^{k-1}_{*}}(h_{2}^{i}(\theta,v^{k-1},\delta,p))\right),
\end{aligned}
\end{equation}
where $v_{*} = (v_{N_{1}+1}^{T},\cdots,v_{N}^{T})^{T}$.%The equation \eqref{eq:gradient approximate} is used in the gradient ascent on Oblique manifold for searching the "sharpest" direction, while equation \eqref{eq:approximate} is appeared in calculating the gradient of BN-Sharpness respect to parameter $\theta$.  
\section{Experiment}
\noindent
The previous work indicate that SGD with large batch size produces sharp minima while small batch size can avoid sharp minima itself \cite{SharpMinima}. Therefore, in order to finding flat minima under large batch size, we use Algorithm \ref{alg1} to reach such target. We consider our algorithm should have a preferable result comparing to SGD under large batch size. 
%There are lots of articles propose specialized training methods for SGD with large batch size\cite{large1, large2, large3}. We aim to test our algorithm's performance rather than beat them. 
%Hence, we compare our method with SGD under the simplest setting. 
We should highlight that the biases gradient $\nabla_{\theta} L\left(\theta+\frac{p}{p+1}\delta v\right)$ and $\nabla_{\theta} L\left(\theta-\frac{p}{p+1}\delta v\right)$ in equation \eqref{eq:approximate} are calculated by different batch data. It's a way of reducing variance which is used in Entropy SGD \cite{entropy_sgd}.
\par
First we test the algorithm with fully batch normalized LeNet \cite{Lenet1} to test the performance for CIFAR10 \cite{cifar}. Update rule is SGD with momentum by setting learning rate as 0.2 and decay it by a factor 0.1 respectively in epoch 60, 120, 160 and momentum parameter as 0.9. We use 10000 batch size, and 5e-4 weight decay ratio for all the three experiments. 
\par
For experiments with regularization term, we clip the gradient of BN-Sharpness by norm with factor 0.1. We use "SGDS-number-decay" to represent the algorithm regularized by BN-Sharpness iterate "number" times of searching sharpest direction and using a weight decay ratio as "decay". For example, "SGDS-5-5e-4" means iterate 5 times of searching $v$ in \ref{def:Lpsharpness} ($K_{1}$=5 in Algorithm \ref{alg1}) and setting weight decay ratio as 5e-4. In addition, the initial point of iteration is set to be the gradient direction like in equation \eqref{eq:sharpnest}. For the experiments with regularized BN-Sharpness, we choose $\lambda$ as 1e-4 which increase by a factor of 1.02 for each epoch. We set $\delta=0.001$, %We adjust $\lambda$ mainly rely on that we aim to produce "flat" minima rather than "flat" iterates, 
and the $p$ is chosen as 2. 
\begin{figure}[H]\centering
	\begin{subfigure}[b]{0.235\textwidth}
		\includegraphics[width=\textwidth]{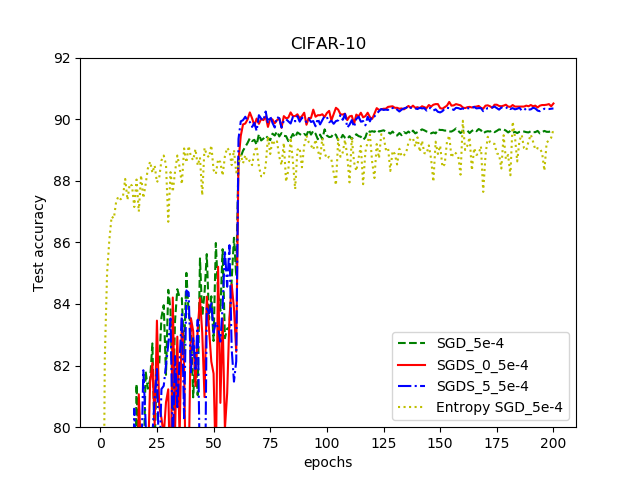}
		\caption{The accuracy of 2048 batch size VGG11 with weight decay on CIFAR10 dataset}
		%\label{fig:lenet}
	\end{subfigure}
	\begin{subfigure}[b]{0.235\textwidth}
		\includegraphics[width=\textwidth]{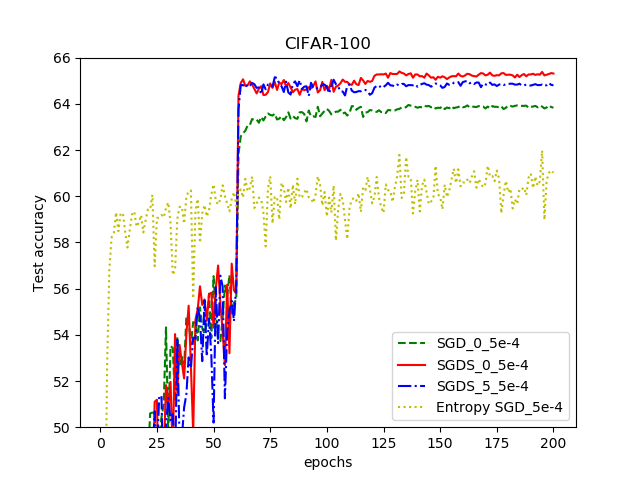}
		\caption{The accuracy of 2048 batch size VGG11 with weight decay on CIFAR100 dataset}
		%\label{fig:lenet}
	\end{subfigure}
	\begin{subfigure}[b]{0.235\textwidth}
		\includegraphics[width=\textwidth]{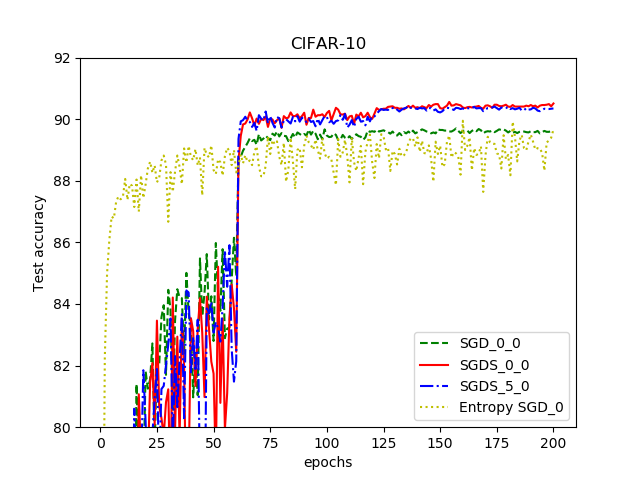}
		\caption{The accuracy of 2048 batch size VGG11 without weight decay on CIFAR10 dataset}
		%\label{fig:lenet}
	\end{subfigure}
	\begin{subfigure}[b]{0.235\textwidth}
		\includegraphics[width=\textwidth]{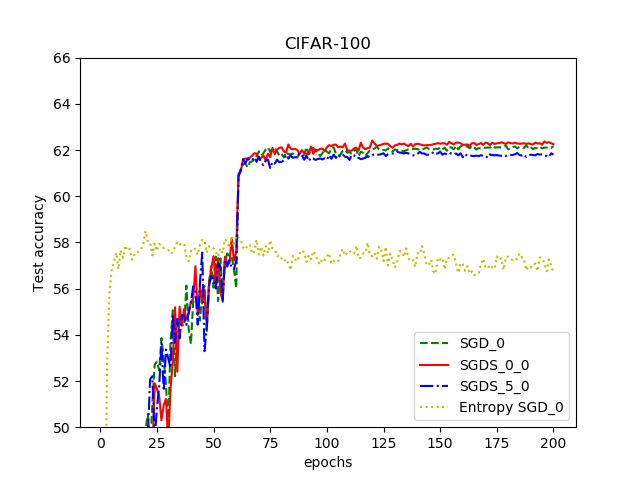}
		\caption{The accuracy of 2048 batch size VGG11 without weight decay on CIFAR100 dataset}
		%\label{fig:lenet}
	\end{subfigure}
	\caption{Performance on VGG}
\end{figure}
\begin{table}[htp]
	\centering
	\resizebox{230pt}{30pt}{
		\begin{tabular}{c|cccc}
			%\hline
			%Dataset & \multicolumn{3}{|c|}{CIFAR-10} & \multicolumn{3}{|c}{CIFAR-100}\\
			\hline
			Algorithm & SGD-5e-4& SGDS-0-5e-4& SGDS-5-5e-4 & E-SGD-5e-4\\
			CIFAR10 & 91.67 & 91.78  & 91.55 & 91.03 \\ 
			CIFAR100 & 69.33 & 69.07 & 68.53 & 67.37 \\
			\hline
			Algorithm & SGD& SGDS-0-0& SGDS-5-0 & E-SGD-0\\
			CIFAR10 & 90.5 & 90.53 & 90.53 & 89.96 \\
			CIFAR100  & 65.5 & 65.45 & 64.82 & 63.45 \\
			\hline
	\end{tabular}}
	\caption{Performance of VGG with batch size as 128.}
\end{table}
\begin{table}[htp]
	\centering
	\resizebox{230pt}{30pt}{
		\begin{tabular}{c|cccc}
			%\hline
			%Dataset & \multicolumn{3}{|c|}{CIFAR-10} & \multicolumn{3}{|c}{CIFAR-100}\\
			\hline
			Algorithm & SGD-5e-4& SGDS-0-5e-4& SGDS-5-5e-4 & E-SGD-5e-4\\
			CIFAR10 & 89.6 & 90.51 & 90.35 & 89.67 \\ 
			CIFAR100 & 63.83 & 65.31 & 64.8 & 61.02 \\
			\hline
			Algorithm & SGD& SGDS-0-0& SGDS-5-0 & E-SGD-0\\
			CIFAR10 & 88.5 & 88.58 & 88.9 & 87.13 \\
			CIFAR100  & 62.17 & 62.25 & 61.8 & 56.68 \\
			\hline
	\end{tabular}}
	\caption{Performance of VGG with batch size as 2048.}
\end{table}
The accuracy result is referred to Figure \ref{fig:lenet}, where the accuracy results of "SGD", "SGD with sharpness" and "SGD with sharpness iteration=5" are respectively 69.87, 71.84, 70.95. 
We see that our algorithm has a significant promotion on such toy example. Now we test our algorithm on a deeper network VGG11 with bath normalization \cite{VGG}. We respectively test vanilla SGD, SGD with BN-Sharpness regularization and Entropy-SGD (represented as E-SGD) on the network. The experiments can be divided into two groups, large batch size and small batch size. The batch size we used for the two groups of experiments are respectively 128 and 2048.
\par
For SGDS, the $\delta$ in CIFAR10 is 5e-4 and in CIFAR100 is 1e-3, learning rate is 0.2 and decay by a factor 0.1 respectively in epoch 60, 120, 160. The learning rate for SGD is 0.1 and we decay it like SGDS. Other hyper-parameters we used follow the setting in the first experiment of LeNet. 
\par
For Entropy SGD we follow the same hyper parameters in \cite{entropy_sgd} for experiments with 2048 batch size, except for the learning rate. We set learning rate like SGD experiments. But for experiment with 128 batch size, we turn off the drop out and use a 0.01 global learning rate. We also didn't adjust the $\gamma$ accord with iteration times. These adjustments will make Entropy-SGD perform better.
From the results, we conclude that SGD with small batch size can avoid "sharp" minima itself, and regularized by sharpness has little influence. Here we don't pay a lot attention to searching the sharpest direction. This is motivated by reducing computation complexity as well as balancing regularized term and loss function. 
\par
We emphasize that dividing the sharpness target and accuracy target is important. It allows us to dynamically adjust "flat" target which avoid iterates stack in a wide valley when accuracy in a low standard. Actually, if we use the same hyper parameters for Entropy SGD under batch size 128 and 2048. The training accuracy will end in a low level. Even though, Entropy SGD still perform general. We think it suffer from two aspects: First, sampling 20 points($L=20$ in Entropy SGD) for a huge model is not enough, which make it unstable. It will present a high variance result; Second, iterates tend to stack in a inaccurate point, since the loss function may be inexact (See Supplement Material).
\par
Training model with SGD for 600 epochs (adjust learning rate by iterations) can reach the results of SGDS. It's a viewpoint for large batch size SGD: Training longer, generalize better \cite{large1}. However, our method can reach such result under fewer iterations.      
\par
%This observation doesn't related to the norm of weight, since it's still correct with weight decay turned on. However, large batch size tend to produce "sharp" minima, adding sharpness as a regularization term indeed generalize better. 
\section{Conclusion}
\noindent
We first prove that tradition definitions of sharpness are insufficient to describe geometrical structure of neural network with batch normalization. Based on that, we propose a scale invariant BN-Sharpness to measure the sharpness of minima. Then, we give an algorithm based on BN-Sharpness. It has computational advantage comparing to existing sharpness based algorithms. 
\par
For all algorithms in order to find "flat" minima (including our algorithm), the training purpose is finding a "flatter" minima generalize better, rather than finding a "flatter" point. Therefore, for such kind of algorithms, setting a large proportion to sharpness target will easily force iterates walk into a "flat" local minimum that generalize poor. Therefore, balancing the sharpness target and loss target is a delicate but meaningful problem.
\par
Finally, we don't particularly discuss the combination of our algorithms with training tricks under large batch size such as \cite{large1}, \cite{large2} and \cite{large3}. In fact, combing them together it's a very interesting topic. 

%% The file named.bst is a bibliography style file for BibTeX 0.99c
\bibliographystyle{named}
\bibliography{reference}
\appendix
\onecolumn
\section{Some Properties of BN-sharpness}
	Though BN-sharpness is a substitution of $L^{p}$ norm, we don't konw the precise relationship between $\delta$-sharpness ($L^{\infty}$ norm) and BN-sharpness. Here we theoretically analyse the BN-sharpness for a partially batch normalized neural network.  Before that, we reveal mathematical meaning of $\delta$-sharpness.
	\begin{theorem}
		If $L(\theta)$ is continuous, for given $\theta$ and $\delta$, then $\mathop{\max}_{\theta^{'}\in B_{2}(\theta,\delta)}|L(\theta^{'})-L(\theta)|=\|L(\cdot)\|_{\infty}^{\delta,\theta}$ where $\|\cdot \|_{\infty}^{\delta,\theta}$ is the norm of measurable function space $L^{\infty}(B_{2}(\theta,\delta),\mu)$. $\|L(\cdot)\|_{\infty}^{\delta,\theta}$ is defined as
		\begin{equation}
		\mathop{\inf}_{\mu(E_{0})=0,E_{0}\subset B_{2}(\delta,\theta)}\left(\mathop{\sup}_{\theta^{'}\in B_{2}(\theta,\delta)-E_{0}}|L(\theta^{'})-L(\theta)|\right),
		\label{ess_norm}
		\end{equation}
		where $\mu$ is Lebesgue measure.
		\label{th1:equalivent}
	\end{theorem}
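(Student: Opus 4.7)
The plan is to prove the two inequalities separately. Let $M := \max_{\theta' \in B_{2}(\theta,\delta)}|L(\theta')-L(\theta)|$, which is finite and attained because $B_{2}(\theta,\delta)$ is compact and $L$ is continuous; and let $N := \|L(\cdot)\|_{\infty}^{\delta,\theta}$ be the essential supremum defined in \eqref{ess_norm}.

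For the easy direction $N \leq M$, I would simply take the admissible null set $E_{0} = \emptyset$ in the infimum defining $N$; this yields $N \leq \sup_{\theta' \in B_{2}(\theta,\delta)}|L(\theta')-L(\theta)| = M$, since the sup of a continuous function on a compact set equals its maximum.

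The substantive direction is $M \leq N$. The natural approach is contradiction: assume $N < M$ and pick $\alpha$ with $N < \alpha < M$. By definition of $M$ there exists $\theta^{*}\in B_{2}(\theta,\delta)$ with $|L(\theta^{*}) - L(\theta)| = M > \alpha$. By continuity of $L$, the set $U := \{\theta' : |L(\theta') - L(\theta)| > \alpha\}$ is open and contains $\theta^{*}$. The key technical step is to argue that $U \cap B_{2}(\theta,\delta)$ has strictly positive Lebesgue measure. If $\theta^{*}$ is an interior point of the ball this is immediate; if $\theta^{*}$ lies on the boundary, then since $B_{2}(\theta,\delta)$ is a Euclidean ball with nonempty interior, every neighborhood of $\theta^{*}$ meets the interior in a set of positive measure, so the intersection with $U$ still has positive measure. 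Consequently, for any null set $E_{0}$, the difference $(U \cap B_{2}(\theta,\delta)) \setminus E_{0}$ is nonempty, and on it $|L-L(\theta)| > \alpha$. Taking the sup over this set and then the infimum over all null $E_{0}$ gives $N \geq \alpha$, contradicting $\alpha > N$. Hence $M \leq N$, and combined with the first step we obtain $M = N$.

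The main obstacle I anticipate is purely bookkeeping: making the boundary case rigorous, i.e., showing that continuity together with the geometry of the Euclidean ball forces the super-level set $\{|L-L(\theta)|>\alpha\}$ to have positive Lebesgue measure inside $B_{2}(\theta,\delta)$ even when the maximizer sits on $\partial B_{2}(\theta,\delta)$. Everything else is a direct application of the definitions of essential supremum and of continuity.
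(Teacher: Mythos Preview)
Your proposal is correct and follows essentially the same strategy as the paper: both establish $N \leq M$ trivially and then use continuity of $L$ near the maximizer $\theta^{*}$ to show that the set where $|L-L(\theta)|$ is close to $M$ has positive Lebesgue measure inside the ball, which forces $N \geq M$. Your treatment is in fact slightly more careful than the paper's, which uses an auxiliary ball of radius $(\delta - \|\theta^{*}-\theta\|)\wedge\delta^{*}$ and thus tacitly glosses over the boundary case $\|\theta^{*}-\theta\|=\delta$ that you handle explicitly.
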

	\begin{proof}
		Obviously, $\mathop{\max}_{\theta^{'}\in B_{2}(\theta,\delta)}|L(\theta^{'})-L(\theta)|\geq\|L(\cdot)\|_{\infty}^{\delta,\theta}$. Suppose that
			\[\mathop{\inf}_{\mu(E_{0})=0,E_{0}\subset B_{2}(\delta,\theta)}\left(\mathop{\sup}_{\theta^{'}\in B_{2}(\theta,\delta)-E_{0}}|L(\theta^{'})-L(\theta)|\right) = \mathop{\sup}_{\theta^{'}\in B_{2}(\theta,\delta)-E^{*}}|L(\theta^{'})-L(\theta)|,\]
			and $\mathop{\max}_{\theta^{'}\in B_{2}(\theta,\delta)}L(\theta^{'})-L(\theta) = L(\theta^{*})-L(\theta)$. If $\theta^{*}$ belongs to $E-E^{*}$, then $\mathop{\max}_{\theta^{'}\in B_{2}(\theta,\delta)}|L(\theta^{'})-L(\theta)|\leq\|L(\cdot)\|_{\infty}^{\delta,\theta}$. On the other hand, if $\theta\in E^{*}$, continuity of $L(\theta)$ implies that for any $\varepsilon>0$ there exist a $\delta^{*}$ satisfy $|L(\theta^{'})-L(\theta^{*})|<\varepsilon$ when $\|\theta^{'}-\theta^{*}\|<\delta^{*}$. Therefore, we have
		\begin{equation}
		|L(\theta^{*})-L(\theta)|-\varepsilon<\mathop{\sup}_{\theta^{'}\in B_{2}(\theta^{*},(\delta-\|\theta^{*}-\theta\|)\wedge\delta^{*})}|L(\theta^{'})-L(\theta)|\leq \mathop{\sup}_{\theta^{'}\in B_{2}(\theta,\delta)-E^{*}}|L(\theta^{'})-L(\theta)|.
		\label{eq:varepsilon}
		\end{equation}
		We conclude that $\mathop{\max}_{\theta^{'}\in B_{2}(\theta,\delta)}|L(\theta^{'})-L(\theta)|\geq\|L(\cdot)\|_{\infty}^{\delta,\theta}$ by the arbitrary of $\varepsilon$ in (\ref{eq:varepsilon}).
	\end{proof}
	\par
	We see that $\delta$-sharpness is actually the $L^{\infty}$ norm of function $L(\cdot)-L(\theta)$. Now, we reveal the relationship between BN-sharpness and $\delta$-sharpness since BN-sharpness is actually a substitution of $L^{p}$ norm.
	\begin{theorem}
		Given a continuous function $L(\theta)$, positive number $\delta$ and point $\theta$, we have $\|L(\theta^{'})\|_{\infty}^{\delta\|\phi(\theta)\|,\theta}\geq 2^{-\frac{1}{p}}\delta\|L(\theta^{'})\|_{p}^{\delta,\theta}$ for any $p<\infty$, and
		\begin{equation}
		\begin{aligned}
		\delta\lim_{p\rightarrow\infty}\|L(\cdot)\|^{\delta,\theta}_{p}=\mathop{\sup}_{t\in[-\delta,\delta]}\mathop{\max}_{v\in\phi(\theta)}L(\theta+tv)-L(\theta)
		\leq\|L(\cdot)\|_{\infty}^{\delta\|\phi(\theta)\|,\theta}.
		\end{aligned}
		\label{eq:inequality}
		\end{equation}
		\label{thm:infty}
	\end{theorem}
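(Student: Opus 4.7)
The plan is to treat the two statements separately, since the first is a pointwise comparison while the second requires a limit-exchange argument. For the first inequality, I would start from the key geometric observation that for any $v\in\phi(\theta)$ and any $t\in[-\delta,\delta]$, the perturbed point satisfies $\|tv\|=|t|\cdot\|\phi(\theta)\|\le\delta\|\phi(\theta)\|$, so $\theta+tv\in B_2(\theta,\delta\|\phi(\theta)\|)$. By the previous theorem identifying $\delta$-sharpness with the essential supremum over the Euclidean ball, we get $|L(\theta+tv)-L(\theta)|\le\|L(\cdot)\|_\infty^{\delta\|\phi(\theta)\|,\theta}$ pointwise. Raising to the $p$-th power, integrating over $[-\delta,\delta]$, and using that the interval has length $2\delta$, the integral is bounded by $2\delta\bigl(\|L(\cdot)\|_\infty^{\delta\|\phi(\theta)\|,\theta}\bigr)^p$. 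Dividing by $\delta^{1/p}$, taking $p$-th roots, and then taking the supremum over $v\in\phi(\theta)$ produces $\delta\|L(\cdot)\|_p^{\delta,\theta}\le 2^{1/p}\|L(\cdot)\|_\infty^{\delta\|\phi(\theta)\|,\theta}$, which is exactly the claim.

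For the limit equality, define $f_v(t)=L(\theta+tv)-L(\theta)$ for $v\in\phi(\theta)$. The strategy is to establish matching $\limsup$ and $\liminf$ bounds. The upper bound follows from the same pointwise estimate as above: $\delta\|L(\cdot)\|_p^{\delta,\theta}\le 2^{1/p}\sup_{v,t}|f_v(t)|$, so $\limsup_{p\to\infty}\delta\|L(\cdot)\|_p^{\delta,\theta}\le\sup_{t,v}|f_v(t)|$. For the matching lower bound, pick $(v^*,t^*)$ that nearly attains the supremum over the compact set $\phi(\theta)\times[-\delta,\delta]$ (which is compact because $\phi(\theta)$ is a product of spheres and $L$ is continuous). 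Continuity of $f_{v^*}$ gives a subinterval of $[-\delta,\delta]$ of some length $\eta>0$ on which $|f_{v^*}|$ exceeds $\sup_{t,v}|f_v(t)|-\varepsilon$. Plugging this single $v^*$ into the definition of $\|L(\cdot)\|_p^{\delta,\theta}$ gives $\delta\|L(\cdot)\|_p^{\delta,\theta}\ge(\eta/\delta)^{1/p}(\sup_{t,v}|f_v(t)|-\varepsilon)$; since $(\eta/\delta)^{1/p}\to 1$ and $\varepsilon$ is arbitrary, the matching lower bound follows.

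The final inequality in the display is then essentially a restatement of the pointwise containment already used: the set $\{\theta+tv:t\in[-\delta,\delta],v\in\phi(\theta)\}$ sits inside $B_2(\theta,\delta\|\phi(\theta)\|)$, so the supremum of $|L(\theta+tv)-L(\theta)|$ over this set is dominated by the essential supremum over the ball, which by the previous theorem equals $\|L(\cdot)\|_\infty^{\delta\|\phi(\theta)\|,\theta}$.

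The one genuine subtlety I expect is justifying the exchange of the supremum over $v$ with the limit in $p$; this is where I would be most careful, and the compactness of $\phi(\theta)$ together with continuity of $L$ is what makes the near-maximizer argument work. Everything else is a straightforward combination of Hölder-style estimates, the standard $L^p\to L^\infty$ convergence on a bounded interval for continuous integrands, and the previous theorem. I would also silently read the right-hand side of the limit as $\sup_{t,v}|L(\theta+tv)-L(\theta)|$ (consistent with the absolute value in the definition of BN-sharpness) so that the identification with the $L^\infty$ bound is clean.
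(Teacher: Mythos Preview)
Your proposal is correct and follows essentially the same route as the paper: the pointwise containment $\theta+tv\in B_2(\theta,\delta\|\phi(\theta)\|)$ gives both the first inequality and the $\limsup$ bound, and a continuity/positive-measure-set argument around a (near-)maximizer $(v^*,t^*)$ gives the matching $\liminf$. Your version is in fact slightly cleaner than the paper's, which constructs $v^*$ by componentwise renormalization of $\theta^*-\theta$ (this just recovers $\pm v'$) and has a sign slip in the definition of its set $E_\varepsilon$; your direct use of compactness of $\phi(\theta)\times[-\delta,\delta]$ to pick the maximizer, and your explicit remark about reading the right-hand side with an absolute value, are both improvements in clarity rather than a different argument.
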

	\begin{proof}
		%Suppose $\|L(\cdot)\|^{\delta,\theta}_{\infty}$ is attained when $\theta^{'}=\theta^{*}$, then
			We notice that $\theta+t\delta\|\phi(\theta)\|\in B_{2}(\theta,\|\phi(\theta)\|)$ when $0<t<\delta$ For any $a\neq0$. For any $v$ with $v\in\phi(\theta)$, we have
			\[L(\theta+tv)-L(\theta)\leq \|L(\cdot)\|^{\delta\|\phi(\theta)\|,\theta}_{\infty}.\]
			Then
		\begin{equation}\|L(\cdot)\|^{\delta,\theta}_{p}\leq \frac{1}{\delta^{\frac{1}{p}}}\left(\int_{-\delta}^{\delta}\left(\frac{\|L(\cdot)\|^{\delta\|\phi(\theta)\|,\theta}_{\infty}}{\delta}\right)^{p}dt\right)^{\frac{1}{p}}=\frac{2}{\delta}^{\frac{1}{p}}\|L(\cdot)\|^{\delta\|\phi(\theta)\|,\theta}_{\infty},
		\end{equation}
		which implies
			\[\limsup_{p\rightarrow \infty}\delta\|L(\cdot)\|^{\delta,\theta}_{p}\leq \|L(\cdot)\|^{\delta\|\phi(\theta)\|,\theta}_{\infty}.\]
			Similarly, we can confirm that
			\[\limsup_{p\rightarrow \infty}\delta\|L(\cdot)\|^{\delta,\theta}_{p}\leq\mathop{\sup}_{t\in[-\delta,\delta]}\mathop{\max}_{v\in\phi(\theta)}L(\theta+tv)-L(\theta).\]
			On the other hand, suppose $\mathop{\sup}_{t\in[-\delta,\delta]}\mathop{\max}_{v\in\phi(\theta)}L(\theta+tv)-L(\theta)$ is attained in $(t^{'},v^{'})$. Denote $\theta+t^{'}v^{'}$ as $\theta^{*}$, for any $\varepsilon>0$, choosing $v^{*}$ in BN-sharpness as
			\begin{equation}
			\left(\frac{\theta^{*}_{1}-\theta_{1}}{\|\theta_{1}^{*}-\theta_{1}\|}\|\theta_{1}\|,\cdots,\frac{\theta^{*}_{N_1}-\theta_{N_1}}{\|\theta_{N_{1}}^{*}-\theta_{N_{1}}\|}\|\theta_{N_{1}}\|. \frac{\theta^{*}_{N_{1}+1}-\theta_{N_1+1}}{\|\theta_{N_{1}+1}^{*}-\theta_{N_{1}+1}\|},\cdots,\frac{\theta^{*}_{N}-\theta_{N}}{\|\theta_{N}^{*}-\theta_{N}\|}\right)^{T}
			\end{equation}	
			Let $E_{\varepsilon}$ denote $\{t:|L(\theta+tv^{*})-L(\theta^{*})|\geq\varepsilon,-\delta\leq t\leq\delta\}$. According to the continuity of $L(\theta^{'})$, $\mu(E_{\varepsilon})>0$ where $\mu$ is one dimensional Lebesgue measure.
			\[\|L(\cdot)\|^{\delta,\theta}_{p}\geq \frac{1}{\delta^{\frac{1}{p}}}\left(\int_{E_{\varepsilon}}\left|\frac{L(\theta+tv^{*})-L(\theta)}{\delta}\right|^{p}dt\right)^{\frac{1}{p}}\geq \left(\frac{E_{\varepsilon}}{\delta}\right)^{\frac{1}{p}}\frac{1}{\delta}\left(\mathop{\sup}_{t\in[-\delta,\delta]}\mathop{\max}_{v\in\phi(\theta)}L(\theta+tv)-L(\theta)-\varepsilon\right).\]
			By the arbitrary of $\varepsilon$ we conclude
			\[\liminf_{p\rightarrow \infty}\delta\|L(\cdot)\|^{\delta,\theta}_{p}\geq \mathop{\sup}_{t\in[-\delta,\delta]}\mathop{\max}_{v\in\phi(\theta)}L(\theta+tv)-L(\theta),\]
			which implies $\mathop{\sup}_{t\in[-\delta,\delta]}\mathop{\max}_{v\in\phi(\theta)}L(\theta+tv)-L(\theta)=\delta\lim_{p\rightarrow\infty}\|L(\cdot)\|^{\delta,\theta}_{p}.$ Finally, from the definitions, we can easily conclude the last inequality in equation \eqref{eq:inequality}.
		\end{proof}
	The theorem reveals that BN-sharpness is actually a weaker standard of $\delta\|\phi(\theta)\|$-sharpness.Finally, we use a theorem to reveal the influence of $p$ in $\delta$-$L^{p}$ BN-sharpness.
	\begin{theorem}
		For any $0<p\leq q<\infty$, $\delta>0$ and $\theta$, we have
		\begin{equation}
		\|L(\cdot)\|_{p}^{\delta,\theta}\leq \|L(\cdot)\|_{q}^{\delta,\theta}\cdot(2\delta)^{\left(\frac{1}{p}-\frac{1}{q}\right)}.
		\end{equation}
	\end{theorem}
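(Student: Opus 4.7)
The plan is to prove the inequality by fixing the direction $v \in \phi(\theta)$, applying Hölder's inequality to the one-dimensional integral in the BN-Sharpness, and then taking the supremum over $v$ on both sides. The statement is essentially the standard embedding of $L^q$ into $L^p$ on a finite-measure space, specialized to the interval $[-\delta,\delta]$ with Lebesgue measure.

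First, I would fix $v \in \phi(\theta)$ and write $g_v(t) := (L(\theta+tv)-L(\theta))/\delta$, so that the expression inside the supremum defining $\|L(\cdot)\|_p^{\delta,\theta}$ is $\delta^{-1/p}\|g_v\|_{L^p([-\delta,\delta])}$. The key step would be Hölder's inequality with conjugate exponents $q/p \geq 1$ and $q/(q-p)$, applied to the product $|g_v|^p\cdot 1$ on the interval of total length $2\delta$. This yields
\begin{equation*}
\int_{-\delta}^{\delta}|g_v(t)|^p\,dt \;\leq\; \left(\int_{-\delta}^{\delta}|g_v(t)|^q\,dt\right)^{p/q}(2\delta)^{1-p/q},
\end{equation*}
and raising to the power $1/p$ gives the familiar comparison $\|g_v\|_{L^p([-\delta,\delta])}\leq (2\delta)^{1/p-1/q}\,\|g_v\|_{L^q([-\delta,\delta])}$.

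Second, I would multiply both sides by $\delta^{-1/p}$, so that the left-hand side is precisely the quantity whose supremum over $v$ defines $\|L(\cdot)\|_p^{\delta,\theta}$. Collecting the normalization factors on the right-hand side then reproduces $\delta^{-1/q}\|g_v\|_{L^q([-\delta,\delta])}$ multiplied by $(2\delta)^{1/p-1/q}$, i.e., $(2\delta)^{1/p-1/q}$ times the corresponding quantity for exponent $q$. Taking the supremum over $v \in \phi(\theta)$ on both sides (the constant $(2\delta)^{1/p-1/q}$ being independent of $v$) immediately delivers the claimed inequality $\|L(\cdot)\|_p^{\delta,\theta} \leq \|L(\cdot)\|_q^{\delta,\theta}\cdot(2\delta)^{1/p-1/q}$.

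There is no genuine obstacle: the entire argument is a single invocation of Hölder's inequality together with the fact that the supremum over $v$ commutes with multiplication by a positive constant. The only bookkeeping is in tracking the powers of $\delta$ that arise from the two normalizations $\delta^{-1/p}$ and $\delta^{-1/q}$, but these combine cleanly with the Hölder factor $(2\delta)^{1/p-1/q}$ to produce exactly the stated constant.
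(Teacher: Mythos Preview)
Your approach is exactly the paper's: fix a direction $v$, apply H\"older with exponents $q/p$ and $q/(q-p)$ on $[-\delta,\delta]$, then take the supremum over $v\in\phi(\theta)$. The only issue is the final bookkeeping step, where you assert that the normalizations ``combine cleanly'' to give the constant $(2\delta)^{1/p-1/q}$. They do not. Writing $F_p(v)=\delta^{-1/p}\|g_v\|_{L^p([-\delta,\delta])}$, H\"older gives $\|g_v\|_{L^p}\le(2\delta)^{1/p-1/q}\|g_v\|_{L^q}$, and hence
\[
F_p(v)\;\le\;\delta^{-1/p}(2\delta)^{1/p-1/q}\|g_v\|_{L^q}\;=\;2^{1/p-1/q}\,\delta^{-1/q}\|g_v\|_{L^q}\;=\;2^{1/p-1/q}F_q(v),
\]
so the constant that actually emerges is $2^{1/p-1/q}$, not $(2\delta)^{1/p-1/q}$.

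The paper's own proof contains the same slip: a factor of $\delta$ is misplaced when the $1/\delta$ prefactor is pushed through the H\"older bound. In fact the stated inequality with constant $(2\delta)^{1/p-1/q}$ is false for small $\delta$; for instance, with $L(\theta+tv)-L(\theta)=t$ one computes $F_p(v)=(2/(p+1))^{1/p}$, and taking $p=1$, $q=2$, $\delta=0.1$ gives $1>\sqrt{2/3}\cdot\sqrt{0.2}$. So your method is correct and identical to the paper's, but the statement itself should carry the constant $2^{1/p-1/q}$.
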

	\begin{proof}
		Suppose that
			\[\|L(\cdot)\|_{p}^{\delta,\theta} = \left(\frac{1}{\delta}\int_{-\delta}^{\delta}\left|\frac{L(\theta+tv^{*})-L(\theta)}{\delta}\right|^{p}\right)^{\frac{1}{p}}dt,\]
			by Holder inequality and the definition of BN sharpness, we have
			\[
			\begin{aligned}
			\left(\|L(\cdot)\|_{q}^{\delta,\theta}\right)^{p}=\left(\frac{1}{\delta}\int_{-\delta}^{\delta}\left|\frac{L(\theta+tv^{*})-L(\theta)}{\delta}\right|^{p}\right)&\leq \left(\frac{1}{\delta}\int_{-\delta}^{\delta}\left|\frac{L(\theta+tv^{*})-L(\theta)}{\delta}\right|^{p\cdot\frac{q}{p}}\right)^{\frac{p}{q}}\left(\int_{-\delta}^{\delta}1^{\frac{q}{q-p}}dx\right)^{\frac{q-p}{q}}\\
			&\leq\left(\|L(\cdot)\|_{q}^{\delta,\theta}\right)^{\frac{p}{q}}\cdot(2\delta)^{\frac{q-p}{q}}.
			\end{aligned}\]
			Thus, we get the conclusion.
	\end{proof}
	\par
	To summary, these theorems present a link between BN sharpness and original definition of sharpness. Specifically, BN-sharpness is actually a weaker standard of $\delta\|\phi(\theta)\|$-sharpness.% while it's stronger than $\|\phi(\theta)\|\text{tr}(H(\theta))$.
	 Meanwhile we analysis the property of $\|L(\cdot)\|_{p}^{\delta,\theta}$ when $p\rightarrow \infty$. And the last property show that $\|L(\cdot)\|_{p}^{\delta,\theta}$ is monotone with $p$.
	\section{Trace Sharpness}
	In addition, trace of Hessian of a minimum is another popular factor to describe flatness.
	\begin{definition}[Trace-sharpness]
		The trace sharpness $S_{\text{trace-sharpness}}(\theta)$ is defined as $\text{tr}(H(\theta))$, where $H(\theta)$ is the Hessian matrix on point $\theta$.
	\end{definition}
	However, trace sharpness is also ill-posed for BN network.
	\begin{theorem}
		Given a partially batch normalized network, trace sharpness is not scale invariant.
	\end{theorem}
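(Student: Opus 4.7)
The plan mirrors the strategy used for Theorem~\ref{pro:invariant}: exhibit a scaling $T_{\vec{a}}$ under which $\text{tr}(H(\theta)) \neq \text{tr}(H(T_{\vec{a}}(\theta)))$ for some parameter $\theta$. The key mechanism is that although $L$ is invariant under $T_{\vec{a}}$, the Hessian transforms non-trivially because $T_{\vec{a}}$ is a linear coordinate change with nonuniform scaling.

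First, I would write $T_{\vec{a}}$ in matrix form. Let $D$ be the diagonal matrix whose entries equal $a_i$ on the coordinates of the BN parameters $\theta_i$ ($i = 1,\ldots,N_1$) and equal $1$ on the remaining coordinates, so that $T_{\vec{a}}(\theta) = D\theta$. Using scale invariance, for any perturbation $s$,
\begin{equation*}
L(T_{\vec{a}}(\theta) + Ds) \;=\; L(D(\theta+s)) \;=\; L(\theta+s).
\end{equation*}
Differentiating both sides twice in $s$ at $s=0$ and applying the chain rule on the left gives the transformation rule
\begin{equation*}
D\,H(T_{\vec{a}}(\theta))\,D \;=\; H(\theta),\qquad\text{hence}\qquad H(T_{\vec{a}}(\theta)) \;=\; D^{-1} H(\theta)\, D^{-1}.
\end{equation*}
Taking the trace yields
\begin{equation*}
\text{tr}\bigl(H(T_{\vec{a}}(\theta))\bigr) \;=\; \sum_{k} \frac{H_{kk}(\theta)}{d_k^2},
\end{equation*}
where $d_k = a_i$ if coordinate $k$ lies in the $i$-th BN block and $d_k = 1$ otherwise.

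Finally I would pick all BN scales equal to a single $a_0 > 0$ and choose $\theta$ so that $\sum_{k\in\text{BN}} H_{kk}(\theta) \neq 0$. Then
\begin{equation*}
\text{tr}\bigl(H(T_{\vec{a}}(\theta))\bigr) \;=\; \frac{1}{a_0^{2}}\sum_{k\in\text{BN}} H_{kk}(\theta) \;+\; \sum_{k\notin\text{BN}} H_{kk}(\theta),
\end{equation*}
which diverges as $a_0\to 0$ (and differs from $\text{tr}(H(\theta))$ for any $a_0\neq 1$). This contradicts scale invariance.

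The one delicate step is justifying the existence of $\theta$ with $\sum_{k\in\text{BN}} H_{kk}(\theta) \neq 0$. Because $L$ is constant along rays through the origin in each BN block, the second derivative in the purely radial direction $\theta_i/\|\theta_i\|$ vanishes, but the second derivatives along the standard basis vectors $e_k$ (which are generically not radial once $\theta_i$ has more than one component) do not. So for any nondegenerate BN parameter block and any $\theta$ at which $L$ exhibits curvature transverse to the scaling ray, the BN-block trace is nonzero. This is the only subtlety; the rest is a direct chain-rule calculation.
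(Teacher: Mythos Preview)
Your proof is correct and follows essentially the same approach as the paper: both derive the Hessian transformation rule $H(T_{\vec{a}}(\theta)) = D^{-1} H(\theta) D^{-1}$ from scale invariance via the chain rule, and conclude that the BN-block contribution to the trace scales by $a_0^{-2}$. You are in fact slightly more careful than the paper, which simply asserts $a_1^{-2}\text{tr}(A_1)+\text{tr}(A_2)\neq \text{tr}(A_1)+\text{tr}(A_2)$ without discussing the nondegeneracy condition $\text{tr}(A_1)\neq 0$ that you explicitly address.
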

	\begin{proof}
		For a partially batch normalized network, without of generality,  we assume it is batch normalized in the first layer. We choose $\vec{a} = (a_{1},1,\cdots,1)^{T}$, then we see $L(\theta)=L(T_{\vec{a}}(\theta))$ but \begin{equation}
		\nabla^{2}L(T_{\vec{a}}(\theta))=\left[\begin{matrix}
		a_{1}^{-1} I_{n_{1}} &\\
		 & I_{n_{2}}
		\end{matrix}\right]\nabla^{2} l(\theta) \left[\begin{matrix}
		a_{1}^{-1} I_{n_{1}} &\\
		& I_{n_{2}}
		\end{matrix}\right],
		\end{equation}
		where $n_{1}$ is the dimension of $\theta$ and $n_{2}$ is the dimension of $(\theta_{2},\cdots,\theta_{N})^{T}$. Then suppose that
		\begin{equation}
		\nabla^{2} L(\theta)=\left[\begin{matrix}
		A_{1} & * \\
		* & A_{2}\\
		\end{matrix}\right],
		\end{equation}
		where $A_{1},A_{2}$ respectively have same dimension with $I_{n_{1}},I_{n_{2}}. $We have
		\begin{equation}
		\text{tr}(\nabla^{2}L(T_{\vec{a}}(\theta))) = a_{1}^{-2}\text{tr}(A_{1})+\text{tr}(A_{2})\neq\text{tr}(A_{1})+\text{tr}(A_{2})=\text{tr}(\nabla^{2}L(\theta)).
		\end{equation}
		It illustrates that trace-sharpness is not scale invariant.
	\end{proof}
	We see trace sharpness can be upper bounded by $\delta$-sharpness (by Taylor's expansion). Hence, trace sharpness is weaker than $\delta$-sharpness. But BN-sharpness is also weaker than $\delta$-sharpness. We use the next theorem to reveal that BN-sharpness is also a stronger standard compared to trace sharpness.
	\begin{theorem}
		If $\theta$ in BN sharpness is a minimum point, then we have $\|L(\cdot)\|_{p}^{\delta,\theta}\geq \delta\left(\frac{2}{2p+1}\right)^{\frac{1}{p}}\|\phi(\theta)\|^{2}\frac{\text{tr}(H(\theta))}{n}$, where $H(\theta)$ is Hessian of $\theta$
		\label{thm:bounded}
	\end{theorem}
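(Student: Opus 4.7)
The plan is to use the minimum condition to second-order Taylor-expand $L(\theta+tv)$, carry out the one-dimensional $t$-integration in closed form, and then lower-bound the resulting quadratic form $v^{T}H(\theta)v$ by an averaging argument on the product manifold $\phi(\theta)$.

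\textbf{Step 1 (Taylor + integration).} Since $\theta$ is a minimum, $\nabla L(\theta)=0$ and $H(\theta)\succeq 0$. Taylor's theorem gives $L(\theta+tv)-L(\theta)=\tfrac{t^{2}}{2}v^{T}H(\theta)v+o(t^{2}\|v\|^{2})$, which is non-negative to leading order so the absolute values in the BN-sharpness integrand can be dropped. Substituting and using $\int_{-\delta}^{\delta}t^{2p}\,dt=2\delta^{2p+1}/(2p+1)$, for every admissible $v\in\phi(\theta)$ one obtains
\[
\|L(\cdot)\|_{p}^{\delta,\theta}\;\geq\;\frac{1}{\delta^{1/p}}\left(\frac{2\delta^{p+1}}{2p+1}\cdot\frac{(v^{T}H v)^{p}}{2^{p}}\right)^{1/p}=\delta\left(\frac{2}{2p+1}\right)^{1/p}\frac{v^{T}H v}{2}.
\]
Taking the supremum over $v\in\phi(\theta)$ reduces the theorem to establishing the purely linear-algebraic inequality $\sup_{v\in\phi(\theta)} v^{T}H(\theta)v \ge 2\|\phi(\theta)\|^{2}\,\text{tr}(H(\theta))/n$.

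\textbf{Step 2 (Trace bound by averaging).} For the quadratic-form bound I would use $\sup\geq\mathbb{E}$ with $v$ sampled uniformly on the product-of-spheres underlying $\phi(\theta)$: each BN block $v_{i}$ drawn independently and uniformly on the sphere of radius $\|\theta_{i}\|$ in $\mathbb{R}^{n_{i}}$, and the non-BN part $v_{*}$ drawn uniformly on the unit sphere of the remaining dimension. Independence and mean-zero of each block annihilate all off-diagonal block contributions to $v^{T}Hv$, while $\mathbb{E}[u u^{T}]=(r^{2}/k)I_{k}$ for a uniform $u$ on a radius-$r$ sphere in $\mathbb{R}^{k}$ turns each diagonal contribution into $(\|\theta_{i}\|^{2}/n_{i})\,\text{tr}(H_{ii})$. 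Summing these weighted block traces and rearranging into $\|\phi(\theta)\|^{2}\,\text{tr}(H)/n$ delivers the claim.

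\textbf{Main obstacle.} The Taylor/integration portion is essentially mechanical; the hard part is the final trace estimate, because the product-of-spheres geometry of $\phi(\theta)$ distributes the quadratic weight non-uniformly across blocks of different dimension $n_{i}$ and different radii $\|\theta_{i}\|$, whereas $\text{tr}(H)/n$ is the uniform average eigenvalue. Recovering the stated constant $2/n$ will likely require an AM-GM or Cauchy-Schwarz rearrangement that pairs the block sizes $n_{i}$ and radii $\|\theta_{i}\|^{2}$ against $\|\phi(\theta)\|^{2}=1+\sum_{i=1}^{N_{1}}\|\theta_{i}\|^{2}$ and $n=\sum_{i}n_{i}$; this constant-tracking step is where I expect most of the technical effort to lie.
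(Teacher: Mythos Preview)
Your Step~1 matches the paper's argument: Taylor-expand at the minimum, integrate $t^{2p}$ over $[-\delta,\delta]$, and reduce everything to bounding $\sup_{v\in\phi(\theta)}v^{T}H(\theta)v$. One cosmetic discrepancy: the paper writes the second-order term as $t^{2}v^{T}Hv$ without the factor $1/2$, so it never has to recover the extra factor of~$2$ that you carry into your Step~2 target; your Taylor convention is the standard one, so this is a constant mismatch baked into the paper's statement rather than an error on your part.

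Your Step~2, however, is not what the paper does. The paper does not average over the product of spheres at all; it goes straight through the largest eigenvalue, asserting
\[
\sup_{v\in\phi(\theta)}v^{T}H(\theta)v\;\ge\;\lambda_{\max}(H(\theta))\,\|\phi(\theta)\|^{2},
\]
and then invoking $\lambda_{\max}\ge\text{tr}(H)/n$. That is a two-line finish with no block decomposition and no constant-tracking. Your averaging route is more faithful to the product-of-spheres geometry of $\phi(\theta)$, but---as you yourself flag---it produces a block-weighted sum $\sum_{i}(\|\theta_{i}\|^{2}/n_{i})\,\text{tr}(H_{ii})$ that does not collapse to $\|\phi(\theta)\|^{2}\,\text{tr}(H)/n$ without extra assumptions relating the $n_{i}$ to the $\|\theta_{i}\|$; that obstacle is real, not merely technical. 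The paper simply bypasses it via the eigenvalue bound (though one may fairly observe that the paper's step is itself not obviously justified, since the top eigenvector of $H$ need not lie on the constrained set $\phi(\theta)$).
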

	\begin{proof}
		If $\theta$ is a minimum point, then we have
			\begin{equation}
			\begin{aligned}
			\frac{1}{\delta^{\frac{1}{p}}}\left(\int_{-\delta}^{\delta}\left|\frac{L(\theta+tv)-L(\theta)}{\delta}\right|^{p}dt\right)^{\frac{1}{p}}&=\frac{1}{\delta^{\frac{1}{p}}}\left(\int_{-\delta}^{\delta}\left|\frac{t^{2}v^{T}\nabla^{2}_{\theta}L(\theta)v+o(t^{2})}{\delta}\right|^{p}dt\right)^{\frac{1}{p}}\\
			&\approx\frac{1}{\delta^{1+{\frac{1}{p}}}}\left(\int_{-\delta}^{\delta}t^{2p}\left|v^{T}\nabla^{2}_{\theta}L(\theta)v\right|^{p}dt\right)^{\frac{1}{p}}\\
			&=\delta\left(\frac{2}{2p+1}\right)^{\frac{1}{p}}v^{T}\nabla^{2}_{\theta}L(\theta)v
			\end{aligned}
			\end{equation}
			for any $v$ with $v\in\phi(\theta)$. We have
			\begin{equation}
			\mathop{\sup}_{v:v\in\phi(\theta)}\delta\left(\frac{2}{2p+1}\right)^{\frac{1}{p}}v^{T}\nabla^{2}_{\theta}L(\theta)v\geq \delta\left(\frac{2}{2p+1}\right)^{\frac{1}{p}}\lambda^{\theta}_{\text{max}}\|\phi(\theta)\|^{2}.
			\label{eq:tr_lp_sharpness}
			\end{equation}
			Since $\lambda_{\text{max}}^{\theta}\geq \frac{\text{tr}(H(\theta))}{n}$, we get the conclusion.	
	\end{proof}
	\section{Discussion About Sharpness Based Algorithms}
	There are plentiful sharpness based algorithms aim to producing "flat" minima. We give them a general frame. %Then, we theoretically analysis the shortage of such a frame (e.g. huge computation cost and in appropriate loss objective).
	To understanding easily, we start with Entropy SGD. Entropy SGD set loss function as
	\begin{equation}
	L(\theta) = -\log{\int_{\theta^{'}\in R^{n}}}\exp\left(-f(\theta^{'})-\frac{\gamma}{2}\|\theta^{'} - \theta\|^{2}\right)d\theta{'},
	\label{eq:entropy_sgd}
	\end{equation}
	where $\gamma$ is a positive number, $f(\theta)$ is the original loss function. The $\|\theta^{'}-\theta\|$ term in equation (\ref{eq:entropy_sgd}) gives more weight on the value around $\theta$ which is the motivation of deriving Entropy SGD. However, an interesting phenomenon is that the loss function of Entropy SGD is
	\[
	\begin{aligned}
	L(\theta)&=-\log{\int_{\theta^{'}\in R^{n}}}\exp\left(-f(\theta^{'})-\frac{\gamma}{2}\|\theta^{'} - \theta\|^{2}\right)d\theta{'}
	&= -\log\mathbb{E}_{\theta^{'}}\left[\exp(-f(\theta^{'}))\right],
	\end{aligned}\]
	where $\theta^{'}\sim \mathcal{N}(\theta, \gamma^{-\frac{1}{n}}I_{n})$, $n$ is the dimension of $\theta$. Dividing the term in $\log$ into two parts
	\begin{equation}
	\begin{aligned}
	\mathbb{E}_{\theta^{'}}\exp(-f(\theta^{'})) = \mathbb{E}_{\theta^{'}}\left[\exp(-f(\theta^{'}))\mathbb{I}_{\|\theta^{'}-\theta\|\leq\delta}\right]
	+ \mathbb{E}_{\theta^{'}}\left[\exp(-f(\theta^{'}))\mathbb{I}_{\|\theta^{'}-\theta\|\geq\delta}\right].
	\label{eq:entropy_sgd_divide}
	\end{aligned}
	\end{equation}
	By Jensen inequality, we see that for any $\delta>0$ we have
	\begin{equation}
	\begin{aligned}
	-\log\left(\mathbb{E}_{\theta^{'}}\left[\exp(-f(\theta^{'}))\mathbb{I}_{\|\theta^{'}-\theta\|\leq\delta}\right]\right)
	&\leq -\mathbb{E}_{\theta^{'}}\left[\mathbb{I}_{\|\theta^{'}-\theta\|\leq\delta}\log\left(\exp(-f(\theta^{'}))\right)\right]\\
	&=\mathbb{E}_{\theta^{'}}\left[\mathbb{I}_{\|\theta^{'}-\theta\|\leq\delta}f(\theta^{'})\right].
	\end{aligned}
	\end{equation}
	We conclude that the first part in equation (\ref{eq:entropy_sgd_divide}) is much closer to optimization purpose. However
	\begin{equation}
	\begin{aligned}
	&\mathbb{P}_{\theta^{'}}(\|\theta^{'}-\theta\|\leq\delta)
	&=\mathbb{P}_{\theta^{'}}(\gamma^{-\frac{1}{n}}\chi(n))\int_{0}^{\gamma^{-\frac{1}{n}}\delta}\frac{1}{2^{\frac{n}{2}}\Gamma(\frac{n}{2})}x^{\frac{n}{2}-1}e^{-\frac{x}{2}}dx,
	\end{aligned}
	\end{equation}
	which goes to zero as $n\rightarrow\infty$. That means the first term in the right hand side of equation (\ref{eq:entropy_sgd_divide}) is nearly zero. Hence, loss of Entropy SGD mainly decided by the region outside a neighborhood of $\theta$ which is ambiguous to be used as a loss function.
	\par
	The conclusion is counterintuitive, an intuitive explanation is that the volume of ball in $n$ dimensional space goes to zero when $n$ is large. Since $\log(1+x)\sim x$ when $x\rightarrow 0$, $L(\theta)$ in equation (\ref{eq:entropy_sgd}) is reasonable to be a loss function when $\mathbb{E}_{\theta^{'}}\left[\exp(-f(\theta^{'}))\mathbb{I}_{\|\theta^{'}-\theta\|\geq\delta}\right]$ close to 1.
	\par
	The general frame of algorithm based on local geometrical structure replace loss function $L(\theta)$ with
	\begin{equation}
	\int_{\mathbb{R}^{n}}f(\theta^{'})d\mu_{\theta}(\theta^{'}),
	\end{equation}
	$\mu_{\theta}(\cdot)$ is a finite measure related to $\theta$ defined on $\mathbb{R}^{n}$, it decide the weight of point around $\theta$ contribute to loss. If $\mu_{\theta}(\cdot)$ is absolutely continuous to Lebesgue measure, the loss can be written as
	\begin{equation}
	\int_{\mathbb{R}^{n}}f(\theta^{'})\frac{d\mu_{\theta}(\theta^{'})}{d\theta^{'}}d\theta^{'}.
	\end{equation}
	In Wen et al.(2018) and Wang et al.(2018)'s work, the $\frac{d\mu_{\theta}(\theta^{'})}{d\theta^{'}}$ respectively are $\mathbb{I}_{[\theta-a,\theta+a]}(\theta^{'})$ and $\left(\frac{1}{2\pi\sigma^{2}}\right)^{\frac{n}{2}}\exp\left(-\frac{1}{2\sigma^{2}}\|\theta^{'}-\theta\|^{2}\right)$, where $a,\sigma$ are positive constants related to $\theta$.
	\par
	Under this frame, local algorithm only depends on the measure $\mu_{\theta}(\cdot)$ we choose. In fact, for each $\mu_\theta(\cdot)$, we can define corresponding BN sharpness. The only difference is substituting the norm of $v$ in BN sharpness with the norm induced by $\mu_{\theta}(\cdot)$. In BN sharpness, we use Euclidean norm which induced by the Lebesgue measure constrained in a sphere. However, such a formulation has two inevitable disadvantages, the first is calculating a high dimensional integration. For a NN, the $n$ is usually a huge number, therefore, calculating an accurate loss value of such form is difficult. The other is such loss function may appear to be inappropriate to our optimal objective, such as Entropy SGD.
	\section{A Brief Introduction to Optimization on Manifold}
	A brief summary of optimization on manifold is convert the constraint condition of an optimization problem to a non-constraint problem defined on manifold. The point on such manifold always satisfy constraints. The specific definition of manifold $\mathcal{M}$ can be found in any topology book. Here we only consider matrix manifold(i.e. a subspace of Euclidean space).
	\par
	Suppose $x\in\mathcal{M}$, $\mathcal{M}$ is a manifold, it has a tangent space $T_{x}\mathcal{M}$ which is a linear space but $\mathcal{M}$ may not. The iterates generated by gradient rise on Euclidean space is
	\begin{equation}
	x_{k+1}=x_{k}+\eta\nabla f(x_{k}),
	\label{eq:GD}
	\end{equation}
	$\eta$ is step length. However, the $x_{k}$ in equation (\ref{eq:GD}) may not on manifold because manifold can be a nonlinear space, retraction function ${\rm Retr}_{x}(\eta):T_{x}\mathcal{M}\rightarrow\mathcal{M}$ can fix this problem. Specifically, if $\mathcal{M}$ is $\mathbb{R}^{n}$, the ${\rm Retr}_{x}$ becomes $x+\eta$. We can consider $\eta$ in ${\rm Retr}_{x}(\eta)$ as moving direction of iterating point. Then, the gradient ascent on manifold is given by
	\begin{flalign}
	x_{k+1}={\rm Retr}_{x}\left(\frac{1}{L}{\rm grad}f(x_{k})\right), \label{eq:gdmanifold}
	\end{flalign}
	where ${\rm grad}f(x)$ is Riemannian gradient. Riemannian gradient here is the orthogonal projection of gradient $\nabla f(x)$ to tangent space $T_{x}\mathcal{M}$(rigorous definition need a detailed discussion of geometry structure of manifold which is not the crucial point of this paper). Riemannian gradient is given to decide moving direction on manifold, because $\nabla f(x)$ may not in tangent space $T_{x}\mathcal{M}$ but the moving direction on manifold is only decided by the vector in $T_{x}\mathcal{M}$. All of notations related to manifold can be referred to Boumal et al.\cite{Boumal}. The gradient ascent on manifold helps us find the maximum of function defined on manifold. The convergence theorem of this algorithm has been proved under some extra continuous conditions \cite{Boumal}.
	\section{Missing Proof in the Main Body}
	\begin{proof}[Proof of Theorem \ref{thm:optimization_problem}]
		By Taylor expansion, we have
		\begin{equation}
		\begin{aligned}
		\|L(\cdot)\|_{p}^{\delta,\theta}&=\frac{1}{\delta^{\frac{1}{p}}}\left(\int_{-\delta}^{\delta}\left|\frac{L(\theta+tv)-L(\theta)}{\delta}\right|^{p}dt\right)^{\frac{1}{p}}
		=\frac{1}{\delta^{\frac{1}{p}}}\left(\int_{-\delta}^{\delta}\left|\frac{tv^{T}\nabla_{\theta} L(\theta)+o(t\|\phi(\theta)\|)}{\delta}\right|^{p}dt\right)^{\frac{1}{p}}\\
		&=\frac{1}{\delta^{\frac{1}{p}}}\left(\int_{-\delta}^{\delta}\left|\frac{(tv^{T}\nabla_{\theta} L(\theta))^{p}+o(t\|\phi(\theta)\|)}{\delta^{p}}\right|dt\right)^{\frac{1}{p}}=\frac{1}{\delta^{1+\frac{1}{p}}}\left(\int_{-\delta}^{\delta}t^{p}|v^{T}\nabla_{\theta} L(\theta)|^{p}+o(t\|\phi(\theta)\|)dt\right)^{\frac{1}{p}}\\
		&=\frac{1}{\delta^{1+\frac{1}{p}}}\left(\frac{2\delta^{p+1}}{p+1}|v^{T}\nabla_{\theta} L(\theta)|+o(\delta^{2}\|\phi(\theta)\|)\right)^{\frac{1}{p}}=\left(\frac{2}{p+1}\right)^{\frac{1}{p}}|v^{T}\nabla_{\theta} L(\theta)|(p\geq 1)
		\end{aligned}
		\end{equation}
		when $\delta\rightarrow 0$. Then $\lim_{\delta\rightarrow0}\|L(\cdot)\|_{p}^{\delta,\theta}=\left(\frac{2}{p+1}\right)^{\frac{1}{p}}|v^{T}\nabla_{\theta} L(\theta)|$. Hence for $\lambda>0$ and $\delta\rightarrow0$, $L(\theta)+\lambda(\|L(\cdot)\|_{p}^{\delta,\theta})^{p}\geq L(\theta)$, equality holds when $\nabla_{\theta} L(\theta)=0$ which implies the minimums of $L(\theta)$ also minimize $L(\theta)+\lambda(\|L(\cdot)\|_{p}^{\delta,\theta})^{p}$ when $\delta$ is a small number.
		\end{proof}
\par
	\begin{proof}[Proof of Proposition \ref{thm:approximate}]
		For any $v$ with $v\in\phi(\theta)$, we have
			\[
			\begin{aligned}
			\nabla_{\theta}\frac{1}{\delta}\left(\int_{-\delta}^{\delta}\left(\frac{L(\theta+te)-L(\theta)}{\delta}\right)^{p}dt\right)
			&=\frac{1}{\delta^{2}}\int_{-\delta}^{\delta}p\left(\frac{L(\theta+tv)-L(\theta)}{\delta}\right)^{p-1}(\nabla_{\theta} L(\theta+tv)-\nabla_{\theta} L(\theta))dt\\
			&=\frac{p}{\delta^{p+1}}\int_{-\delta}^{\delta}(t\nabla_{\theta} L(\theta)^{T}v+o(t\|\phi(\theta)\|))^{p-1}(\nabla_{\theta} L(\theta+tv)-\nabla_{\theta} L(\theta))dt\\
			\end{aligned}\]
			Since
			\[
			\begin{aligned}
			\nabla_{\theta} L(\theta+tv)-\nabla_{\theta} L(\theta)&=\nabla_{\theta} L(\theta+tv)-\nabla_{\theta} L(\theta+\varepsilon v)+\nabla_{\theta} L(\theta+\varepsilon v)-\nabla_{\theta} L(\theta)\\
			&=(t-\varepsilon)\nabla_{\theta}^{2}L(\theta+\varepsilon v)v+o(t\|\phi(\theta)\|)+\nabla_{\theta} L(\theta+\varepsilon v)-\nabla_{\theta} L(\theta),
			\end{aligned}\]
			let
			\[h(\theta,\delta,p,v)=\left(\left(t-\frac{p}{p+1}\delta\right)\nabla_{\theta}^{2}L\left(\theta+\frac{p}{p+1}\delta e\right)e+\nabla_{\theta} L\left(\theta +\frac{p}{p+1}\delta v\right)-\nabla_{\theta} L(\theta)\right),\]
			we have
			\[
			\begin{aligned}
			\frac{p}{\delta^{p+1}}\int_{-\delta}^{0}(t\nabla_{\theta} L(\theta)^{T}v&+o(t\|\phi(\theta)\|))^{p-1}(\nabla_{\theta} L(\theta+tv)-\nabla_{\theta}L(\theta))dt\\
			&=\frac{p}{\delta^{p+1}}\int_{-\delta}^{0}\left(t\nabla_{\theta} L(\theta)^{T}v+o(t\|\phi(\theta)\|)\right)^{p-1}(h(\theta,\delta,p,v)+o(t\|\phi(\theta)\|))dt\\
			&=\frac{1}{\delta}(\nabla_{\theta} L(\theta)^{T}v)^{p-1}\left(\nabla_{\theta} L\left(\theta +\frac{p}{p+1}\delta v\right)-\nabla_{\theta}L(\theta)\right)+o(\delta\|\phi(\theta)\|).
			\end{aligned}
			\]
			Here we use the relation
			\[\int_{-\delta}^{0}t^{p-1}\left(t-\frac{p}{p+1}\delta\right)dt=0.\]
			Analogously, we can prove
			\[
			\begin{aligned}
			\frac{p}{\delta^{p+1}}\int_{0}^{\delta}(t\nabla_{\theta} L(\theta)^{T}v&+o(t\|\phi(\theta)\|))^{p-1}(\nabla_{\theta} L(\theta+tv)-\nabla_{\theta} L(\theta))dt\\
			&=\frac{1}{\delta}(\nabla_{\theta} L(\theta)^{T}v)^{p-1}\left(\nabla_{\theta} L\left(\theta-\frac{p}{p+1}\delta v\right)-\nabla_{\theta} L(\theta)\right)+o(\delta\|\phi(\theta)\|)
			\end{aligned}\]
			Finally, we got equation (\ref{eq:approximate}).
	\end{proof}
\end{document}